\title{Towards Reliable Alignment: Uncertainty-aware RLHF}
\author[1]{Debangshu Banerjee}
\author[2]{Aditya Gopalan}
\affil[1,2]{ Department of Electrical and Communication Engineering, Indian Institute of Science, India}  
\begin{document}

\maketitle

\begin{abstract}
Recent advances in aligning Large Language Models with human preferences have benefited from larger reward models and better preference data. However, most of these methodologies rely on the accuracy of the reward model. The reward models used in Reinforcement Learning with Human Feedback (RLHF) are typically learned from small datasets using stochastic optimization algorithms, making them prone to high variability. We illustrate the inconsistencies between reward models empirically on numerous open-source datasets.

We theoretically show that the fluctuation of the reward models can be detrimental to the alignment problem because the derived policies are more overfitted to the reward model and, hence, are riskier if the reward model itself is uncertain. We use concentration of measure to motivate an uncertainty-aware, conservative algorithm for policy optimization. We show that such policies are more risk-averse in the sense that they are more cautious of uncertain rewards. We theoretically prove that our proposed methodology has less risk than the vanilla method. 

We corroborate our theoretical results with experiments based on designing an ensemble of reward models. We use this ensemble of reward models to align a language model using our methodology and observe that our empirical findings match our theoretical predictions.

 % via mathematical analysis and experiments with \textcolor{red}{maybe just mention GPT 2?} state-of-the-art language models. 

\end{abstract}

\section{Introduction}

\begin{figure}[!t]
    \centering
    \begin{minipage}{0.45\textwidth} % Adjust width as needed
        \centering
        \begin{tikzpicture}[
            scale = 0.7, % Adjust scale to make it smaller
            transform shape,
            node distance=0.5cm,
            every node/.style={rectangle, draw, align=center, text width=7cm},
            prompt/.style={fill=blue!20},
            response/.style={fill=green!20}
            ]

            % Define the prompt node
            \node[prompt] (prompt) {%
            \textbf{Prompt:}\\[5pt]
            \emph{Who created the Superman cartoon character?}
            };

            % Define the response node below the prompt
            \node[response, below=of prompt] (response) {%
            \textbf{Response:}\\[0.001pt]
            \emph{Superman, the iconic comic book superhero, was created by writer Jerry Siegel and artist Joe Shuster. Superman first appeared in Action Comics \#1, which was published by Detective Comics, Inc. (later DC Comics) in June 1938. The character's immense popularity established him as one of the most enduring and recognizable figures in the superhero genre.}
            };
        \end{tikzpicture}
    \end{minipage}%
    \hspace{0.04\textwidth} % Small space between the figures
    \begin{minipage}{0.45\textwidth}
        \centering
        \includegraphics[width=0.9\linewidth]{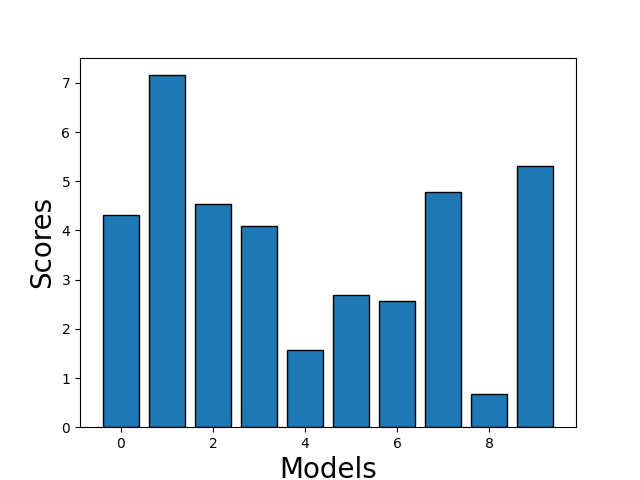} % Adjust width for size
    \end{minipage}
    \caption{\footnotesize{Reward scores assigned by $10$ reward models on the same prompt-response pair. The reward models are identical in that they are trained independently on the same dataset, with the same hyperparameters and number of epochs. Despite this, we see a wide variation in the score assigned by each model. }}
    %\vspace*{-0.5cm}
    \label{figure:evidence}
\end{figure}

Reinforcement Learning with Human Feedback (RLHF) \citep{christiano2017deep, ziegler2019fine} is an influential training approach in modern artificial intelligence research, particularly in the domain of large language models (LLMs). Notable examples include the revolutionary \textbf{ChatGPT} \citep{openai2023gpt}, \textbf{Claude} \citep{anthropic2023introducing}, \textbf{Gemini} \citep{team2023gemini} and \textbf{LLaMA-3} \citep{meta2024introducing}. RLHF is a fine-tuning method to align the behavior of LLMs with human values and preferences.
It has been instrumental in addressing challenges related to model alignment, where the goal is to ensure that an AI system adheres to specific ethical, safety, and utility guidelines defined by its human users. The standard reward-model RLHF framework \citep{ouyang2022training, bai2022constitutional, touvron2023llama} assumes a preference model based on an underlying reward model to accurately capture human preferences. The reward model is trained to predict how well a given response aligns with preferences provided by human evaluators, thus acting as a proxy for human judgment. It is a reward signal in downstream reinforcement learning to improve the LLM. 

\paragraph{Challenges of Reward Model Reliability}
A critical issue in RLHF is the reliability of the learned reward model. For example, look at Figure \ref{figure:evidence}, which shows the reward score assigned to the same prompt-response pair by 10 independently trained identical reward models on the same preference data. Several factors contribute to the uncertainty and potential unreliability of the reward model:
\begin{itemize}
    %\vspace*{-0.2cm}
    \item \textbf{Limited Dataset Size}: The reward model is typically trained on a much smaller dataset than the vast corpora used to pre-train the LLM. For instance, while an LLM may be pre-trained on billions of tokens, the reward model might be trained on a few hundred thousand human-labeled prompt-response pairs. This discrepancy in the data scale can limit the generalization capability of the reward model, leading to noisy estimates of response quality.
    %\vspace*{-0.2cm}
    \item \textbf{Stochastic, Incomplete Optimization}: The reward model is trained using stochastic gradient descent (SGD) or variants, introducing inherent randomness into the optimization process. Using mini-batches of data means that different instances of the reward model, even when trained on the same dataset, may produce different evaluations of the same response due to the randomness in parameter updates. This stochasticity can result in high variance in the model's predictions. Additionally, the optimization process to find a reward model is not completed -- typically 1 or 2 passes over the dataset \citep{stiennon2020learning, meta2024introducing} -- to avoid overfitting.  
\end{itemize}

Thus, a single reward model should not be viewed as an infallible oracle for assessing response quality. Its predictions are inherently uncertain, leading to challenges when fine-tuning the LLM. Overfitting the LLM to a noisy reward model can result in degraded performance, as the model may learn to optimize for the idiosyncrasies of the reward model rather than true human preferences. 

\paragraph{Contributions} We enumerate the contributions made in this work:
\begin{enumerate}
    %\vspace*{-0.2cm}
    \item We provide comprehensive empirical evidence using open-source datasets to demonstrate the variability inherent in reward modeling. 
    %\vspace*{-0.2cm}
    \item  We introduce a conservative policy optimization method incorporating uncertainty measures derived from reward model training.
    %\vspace*{-0.2cm}
    \item We rigorously demonstrate, through theoretical analysis and experiments on LLMs, that our risk-aware conservative policy scheme significantly reduces the likelihood of policy degradation. 
\end{enumerate}

\paragraph{RLHF preliminaries}
The standard RLHF setup \citep{christiano2017deep, ziegler2019fine} is described as follows. Given a prompt $x$, the LLM generates two responses, $y^1$ and $y^2$. A human evaluator selects the preferred response, forming a dataset of the form ${(x_i, y^1_i, y^2_i)}_{i=1}^n$, where $x_i$ is the prompt, and $y^1_i$, $y^2_i$ are model-generated responses. These pairwise comparisons encode ordinal preferences, used to train the reward model. The reward model, $r_\theta$, assigns a scalar reward to each prompt-response pair $(x, y)$, reflecting its likelihood of being preferred. The Bradley-Terry model \citep{bradley1952rank} estimates the probability that $y^1$ is preferred over $y^2$ as:
$\mathbb{P}(y^1 \text{ is preferred over } y^2) = \sigma(r_\theta(x, y^1) - r_\theta(x, y^2))$,
where $\sigma(z) = \frac{1}{1 + e^{-z}}$ is the logistic sigmoid function.
 The reward model is trained by minimizing the negative log-likelihood of human preferences:
$\min_\theta \frac{1}{n}\sum_{i=1}^n -\ln \sigma \left( r_\theta(x_i, y^1_i) - r_\theta(x_i, y^2_i) \right)$. This loss function seeks to adjust the parameters $\theta$ of the reward model such that the predicted rewards for preferred responses are consistently higher than those for less preferred responses, as judged by human evaluators. Using the Bradley-Terry model ensures that the reward model produces outputs that align with human feedback. Once trained, the reward model is used to fine-tune the LLM via reinforcement learning (e.g., PPO \citep{schulman2017proximal}). The objective is to maximize the reward for new prompts while constraining divergence from the reference policy $\pi_0$:
\begin{align}
\label{eq:PPO}
    \max_\pi \mathbb{E}_{x \sim \mathcal{D},\, y \sim \pi(\cdot|x)} \left[ r_\theta(x,y) \right], \text{ s.t. }  \mathrm{KL}(\pi||\pi_0) \leq \epsilon,
\end{align}
Solving this optimization adjusts the LLM to generate responses that align with the reward model to better reflect human preferences. However, the reward function $r_\theta$ above in Equation \ref{eq:PPO} can be inherently highly variable, as seen in Figure \ref{figure:evidence}. 

To illustrate the impact of uncertainty in reward models, consider a simple three-armed bandit problem. Aligning a language model can be viewed as a contextual bandit scenario where the policy assigns probabilities to each arm to maximize the expected return. In this example, the true rewards (shown in green in Figure~\ref{fig:example}) are \( r_1^* < r_2^* < r_3^* \), with Arm 1 having the lowest mean reward and Arms 2 and 3 having higher rewards. However, the estimated rewards (depicted in blue as \( \hat{R}_1 \), \( \hat{R}_2 \), and \( \hat{R}_3 \)) inaccurately suggest that Arm 1 has the highest reward. If probabilities are assigned solely based on these estimates, Arm 1 will receive the highest probability, leading to a lower true return since its actual reward is the lowest. However, when considering the uncertainty intervals (shown in red in Figure~\ref{fig:example}), it becomes evident that Arm 1's high estimated reward comes with significant uncertainty. Arms 2 and 3 exhibit much less uncertainty, albeit having lower estimated rewards. A more conservative strategy that accounts for this uncertainty would allocate greater probabilities to Arms 2 and 3, leveraging their more reliable estimates. This example highlights the trade-off between pursuing high-risk strategies and opting for lower-reward, lower-risk approaches in policy optimization. It demonstrates the importance of incorporating uncertainty into the fine-tuning process.

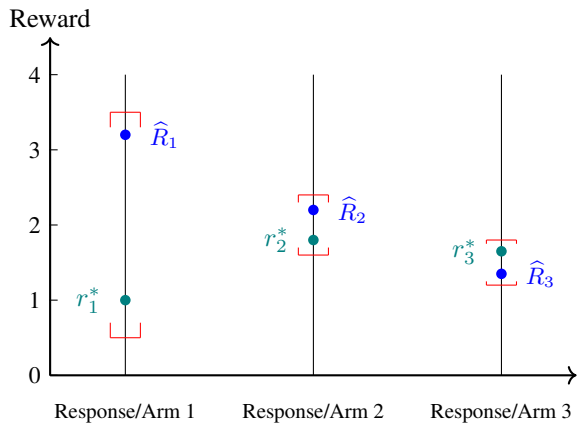
\begin{figure}[!htbp]
\centering
\begin{tikzpicture}[scale=1]
\def\armspacing{2.5}

% Axes
\draw[->, thick] (-1, 0) -- (2*\armspacing+1, 0) node[right]{};
\draw[->, thick] (-1, 0) -- (-1, 4.5) node[above] {Reward};

% Y-axis ticks and labels
\foreach \y in {0,1,...,4} {
    \draw (-1.01, \y) -- (-0.9, \y);
    \node[left] at (-1, \y) {\small{\y}};
}

% Arm 1
\draw[black] (0*\armspacing, 0) -- (0*\armspacing, 4);
\draw[red] (-0.2+\armspacing*0, 0.5) -- (0.2+\armspacing*0, 0.5);
\draw[red] (-0.2+\armspacing*0, 3.5) -- (0.2+\armspacing*0, 3.5);
\draw[red] (-0.2+\armspacing*0, 0.5) -- (-0.2+\armspacing*0, 0.7);
\draw[red] (0.2+\armspacing*0, 0.5) -- (0.2+\armspacing*0, 0.7);
\draw[red] (-0.2+\armspacing*0, 3.5) -- (-0.2+\armspacing*0, 3.3);
\draw[red] (0.2+\armspacing*0, 3.5) -- (0.2+\armspacing*0, 3.3);
\fill[blue] (0*\armspacing, 3.2) circle (0.07);
\node[blue, right] at (0.2+\armspacing*0, 3.2) {\small{$\hat{R}_1$}};
\fill[green!50!blue] (0*\armspacing, 1.0) circle (0.07);
\node[green!50!blue, left] at (-0.2+\armspacing*0, 1.0) {$r^*_1$};
\node[black] at (0*\armspacing, -0.5) {\footnotesize{Response/Arm 1}};

% Arm 2
\draw[black] (1*\armspacing, 0) -- (1*\armspacing, 4);
\draw[red] (-0.2+\armspacing*1, 1.6) -- (0.2+\armspacing*1, 1.6);
\draw[red] (-0.2+\armspacing*1, 2.4) -- (0.2+\armspacing*1, 2.4);
\draw[red] (-0.2+\armspacing*1, 1.6) -- (-0.2+\armspacing*1, 1.7);
\draw[red] (0.2+\armspacing*1, 1.6) -- (0.2+\armspacing*1, 1.7);
\draw[red] (-0.2+\armspacing*1, 2.4) -- (-0.2+\armspacing*1, 2.3);
\draw[red] (0.2+\armspacing*1, 2.4) -- (0.2+\armspacing*1, 2.3);
\fill[blue] (1*\armspacing, 2.2) circle (0.07);
\node[blue, right] at (0.2+\armspacing*1, 2.2) {\small{$\hat{R}_2$}};
\fill[green!50!blue] (1*\armspacing, 1.8) circle (0.07);
\node[green!50!blue, left] at (-0.2+\armspacing*1, 1.8) {$r^*_2$};
\node[black] at (1*\armspacing, -0.5) {\footnotesize{Response/Arm 2}};

% Arm 3
\draw[black] (2*\armspacing, 0) -- (2*\armspacing, 4);
\draw[red] (-0.2+\armspacing*2, 1.2) -- (0.2+\armspacing*2, 1.2);
\draw[red] (-0.2+\armspacing*2, 1.8) -- (0.2+\armspacing*2, 1.8);
\draw[red] (-0.2+\armspacing*2, 1.2) -- (-0.2+\armspacing*2, 1.25);
\draw[red] (0.2+\armspacing*2, 1.2) -- (0.2+\armspacing*2, 1.25);
\draw[red] (-0.2+\armspacing*2, 1.8) -- (-0.2+\armspacing*2, 1.75);
\draw[red] (0.2+\armspacing*2, 1.8) -- (0.2+\armspacing*2, 1.75);
\fill[green!50!blue] (2*\armspacing, 1.65) circle (0.07);
\node[green!50!blue, left] at (-0.2+\armspacing*2, 1.65) {$r_3^*$};
\fill[blue] (2*\armspacing, 1.35) circle (0.07);
\node[blue, right] at (0.2+\armspacing*2, 1.35) {\small{$\hat{R}_3$}};
\node[black] at (2*\armspacing, -0.5) {\footnotesize{Response/Arm 3}};

\end{tikzpicture}
\caption{\footnotesize{A $3$-armed bandit problem illustrating true rewards $r_1^*, r_2^*, r_3^*$ (green circles), estimated rewards (blue circles) $\hat{R}_1, \hat{R}_2, \hat{R}_3$, and uncertainty intervals (red brackets). Arm 1 has the lowest true reward, whereas the highest estimate $\hat{R}_1$. In contrast, arms 2 and 3 have lower reward estimates $\hat{R}_2$ and $\hat{R}_3$, respectively. A naive policy improvement based on only the estimated rewards $\hat{R}_i$ would increase the probability on Arm $1$, leading to a lower (true) expected return. %
% On the other hand, if one considers the uncertainties, arm $1$ has the highest uncertainty level compared to Arms $2$ and $3$. 
A more conservative policy improvement strategy should factor in the uncertainty of the estimate of Arm $1$ and assign a lower probability to it, resulting in a higher expected return.}}
%\vspace*{-0.2cm}
\label{fig:example}
\end{figure}

\paragraph{Related Work}
The pitfalls of overly relying on reward models (as proxies for actual tasks) in RLHF have been extensively documented, often referred to as \textit{reward hacking} \citep{amodei2016concrete} or \textit{reward overoptimization} \citep{gao2023scaling}. For example, \citet{shen2023trickle} demonstrates that even large models resort to random guessing when faced with conflicting instructions and responses. Researchers have explored using reward model ensembles to address the mitigation of reward hacking \citep{coste2023reward, eisenstein2023helping, zhang2024improving}. Leveraging conservative lower confidence bounds (LCBs) on reward to guide the training of LLMs has been investigated by \citet{zhai2023uncertainty, xiong2024iterative, liang2022reward} and \citet{zhang2024improving}. \cite{rame2024warm} use a weighted average of an ensemble of reward models as a reward estimate. Methods for uncertainty quantification in deep learning using model ensembles have been studied by  \citep{lakshminarayanan2016ensemble, liang2022reward, zhai2023uncertainty, coste2023reward, zhang2024improving} among others. Other approaches include \citet{lou2024uncertainty}, where a reparameterization trick is used to learn uncertainties, similar to the dropout method employed by \citet{gal2016dropout}. In this work, we utilize an ensemble of reward models to help quantify reward uncertainty. Our approach mirrors the ensemble reward modeling method of \citet{zhang2024improving}; however, we enhance the training efficiency by freezing the foundation layers when creating ensembles. Our problem formulation is also distinct from the LCB estimates used in previous studies, and offers a principled and practical approach to leverage uncertainty in reward models to perform reliable policy improvement.

\section{Mathematical Modeling}

\paragraph{Notations: } We assume that prompts are strings denoted by $x$ from a prompt set $\mathcal{X}$, and responses are strings denoted by $y$ from a response set $\mathcal{Y}$. A reward model assigns a scalar value to each prompt-response pair $(x, y)$. We consider the learned reward model $\hat{R}$ as a sample estimate of the true human-representative reward model $r^*$. Assuming $\mathcal{X}$ and $\mathcal{Y}$ are finite with cardinalities $\mathrm{X}$ and $\mathrm{Y}$, respectively, both $\hat{R}$ and $r^*$ can be viewed as elements of $\mathbb{R}^{\mathrm{XY}}$. A large language model, for our purposes, is a policy $\pi$ that defines a distribution over responses $\mathcal{Y}$ given a prompt $x$. We also introduce a distribution $\mathcal{D}$ over prompts, representing their ambient frequency in nature. With a slight abuse of notation, we treat the policy $\pi$ as the induced joint distribution over prompts and responses. This allows us to simplify notation by expressing the average reward $\mathbb{E}_{\substack{x \sim \mathcal{D} \\ y \sim \pi(\cdot\,|\,x)}} [ \hat{R}(x, y) ]$ as $\hat{R}^\top \pi$. We denote a covariance matrix by $\Sigma$, use $\| x \|_2$ to represent the Euclidean ($\ell^2$) norm, and define $\| x \|_\Sigma^2$ as the quadratic form $x^\top \Sigma x$.

\paragraph{Noisy Reward Model}
We consider the true reward function \( r^* \), which is unknown, and the learned reward model \( \hat{R} \), which estimates \( r^* \) but is subject to noise due to finite and imperfect training data. We assume:
\begin{assumption}
\label{assm: gaussian}
For any \( (x, y) \), the estimated reward \( \hat{R}(x, y) \) is a Gaussian perturbation of \( r^*(x, y) \):
\begin{align*}
    \hat{R}(x, y) = r^*(x, y) + \mathcal{N}\big(0, \sigma^2(x, y)\big),
\end{align*}
where \( \mathcal{N}(0, \sigma^2(x, y)) \) is a Gaussian random variable with mean zero and variance \( \sigma^2(x, y) \). We assume that the estimates \( \hat{R}(x, y) \) are independent across different \( (x, y) \).
\end{assumption}
Thus, \( \hat{R} \sim \mathcal{N}(r^*, \Sigma) \), where \( \Sigma \) is a diagonal matrix with entries \( \sigma^2(x, y) \).
Our goal is to optimize the policy \( \pi \) to maximize the expected reward estimated by \( \hat{R} \). Let \( \pi_0 \) be a reference policy (e.g., from pre-training), and define \( d = \pi - \pi_0 \). Since \( \hat{R} \sim \mathcal{N}(r^*, \Sigma) \), the scalar \( \hat{R}^\top d \) is normally distributed with mean \( r^{*\top} d \) and variance \( d^\top \Sigma d \):
$\hat{R}^\top d \sim \mathcal{N}\left( r^{*\top} d,\, d^\top \Sigma d \right)$.
To prevent the policy $\pi$ from deviating too much from the reference policy $\pi_0$, we constrain \( d \) to lie within a feasible set \( \mathcal{D} \subset \mathbb{R}^{XY} \).

\paragraph{Lower Bound on the True Objective Function}
The following theorem provides a bound on the optimization problem that accounts for the uncertainty in the reward estimates. The proof is presented in Appendix \ref{sec:Proofs}.
\begin{restatable}{theorem}{surrogate}
Under Assumption \ref{assm: gaussian}, for any $\beta > 0$, the following holds with probability at least $1 - \exp\left(- \frac{\mathrm{XA}}{\beta^2}\right)$:
\begin{align*}
    \sup_{d \in \mathrm{D}} \; \hat{R}^\top d - \beta\|d\|_{\Sigma} \; \leq \; \sup_{d \in \mathrm{D}} \; r^\ast{^\top} d.
\end{align*}     
\end{restatable}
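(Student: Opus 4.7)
The plan is to control the Gaussian deviation $\xi := \hat{R} - r^* \sim \mathcal{N}(0, \Sigma)$ uniformly over the feasible set $\mathrm{D}$. First I would rewrite
\begin{align*}
\sup_{d \in \mathrm{D}}\bigl(\hat{R}^\top d - \beta\|d\|_\Sigma\bigr) = \sup_{d \in \mathrm{D}}\bigl(r^{*\top} d + \xi^\top d - \beta\|d\|_\Sigma\bigr),
\end{align*}
and note that if one can arrange $\xi^\top d \leq \beta \|d\|_\Sigma$ simultaneously for every $d \in \mathrm{D}$, then $\sup_{d \in \mathrm{D}}(\hat{R}^\top d - \beta\|d\|_\Sigma) \leq \sup_{d \in \mathrm{D}} r^{*\top}d$, which is the desired inequality. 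The task thus reduces to a single uniform-in-$d$ deviation statement about the Gaussian noise vector $\xi$.

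The key step is a weighted Cauchy--Schwarz move that strips the dependence on $d$. Writing $\xi^\top d = (\Sigma^{-1/2}\xi)^\top (\Sigma^{1/2} d)$ and applying the standard Cauchy--Schwarz inequality in $\mathbb{R}^{XY}$ yields $\xi^\top d \leq \|\Sigma^{-1/2}\xi\|_2 \cdot \|\Sigma^{1/2}d\|_2 = \|\xi\|_{\Sigma^{-1}}\cdot \|d\|_\Sigma$ for every $d$. Consequently, the single scalar event $E := \{\|\xi\|_{\Sigma^{-1}} \leq \beta\}$ already implies $\xi^\top d \leq \beta\|d\|_\Sigma$ for all $d \in \mathrm{D}$ at once, so no covering argument or union bound over $\mathrm{D}$ is needed, regardless of how rich $\mathrm{D}$ is.

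It remains to lower-bound $\mathbb{P}(E)$. Since by Assumption \ref{assm: gaussian} the covariance $\Sigma$ is diagonal with entries $\sigma^2(x,y)$, the whitened noise $\Sigma^{-1/2}\xi$ is standard Gaussian in $\mathbb{R}^{XY}$, so $\|\xi\|_{\Sigma^{-1}}^2 = \sum_{x,y} \xi(x,y)^2/\sigma^2(x,y)$ is a chi-squared random variable whose degree of freedom equals the ambient dimension. A standard Gaussian/chi-squared tail inequality applied to this single scalar quantity then delivers the claimed bound $1 - \exp(-XA/\beta^2)$. The main obstacle I foresee is precisely the uniform-in-$d$ control at the first step; the entire argument hinges on the weighted Cauchy--Schwarz pairing with the $\Sigma^{-1}/\Sigma$ norms, which converts what would otherwise be a Gaussian-process supremum problem over $\mathrm{D}$ into a clean one-dimensional concentration statement whose strength is dictated by the covariance structure of the reward estimator.
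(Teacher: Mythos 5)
Your proposal follows essentially the same route as the paper's proof: the weighted Cauchy--Schwarz pairing $\xi^\top d \le \|\xi\|_{\Sigma^{-1}}\,\|d\|_{\Sigma}$ reduces the uniform-in-$d$ claim to concentration of the single self-normalized quantity $\|\hat{R}-r^*\|_{\Sigma^{-1}}$, which is exactly the paper's argument. If anything you are slightly more complete, since you spell out how the event $\{\|\xi\|_{\Sigma^{-1}}\le\beta\}$ yields the comparison of the two suprema (a step the paper leaves implicit), while both you and the paper gloss over the same final calibration of the chi-squared tail to the stated probability $1-\exp\left(-\mathrm{XA}/\beta^2\right)$.
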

The above theorem implies that the optimization problem on the left-hand side is a high-probability lower bound for the true optimization problem, which depends on the unknown reward function $r^*$. Given that $r^*$ is not directly available, but we do have access to noisy estimates $\hat{R}$, we propose the following optimization problem as a practical substitute:
\begin{align}
\label{eq:obj_fn}
    \sup_{d \in \mathrm{D}} \hat{R}^\top d - \beta\|d\|_{\Sigma}.
\end{align}
This formulation leads to the following constrained optimization problem:
\begin{align*}
    \max_\pi \hat{R}^{\top} \pi \quad \text{subject to} \quad (\pi - \pi_0)^\top \Sigma (\pi - \pi_0) \leq \epsilon,
\end{align*}
for some $\epsilon > 0$. The weighted constraint on the policy update penalizes deviations more heavily for prompt-response pairs with higher variance in the reward estimates, thereby incorporating the uncertainty into the optimization process.
\begin{remark}
Note that similar variants of the constrained optimization problem have been explored previously in the literature. For example, the unconstrained version of our approach is equivalent to the vanilla policy gradient method \citep{sutton1999policy}. The standard RLHF formulation typically employs the PPO algorithm \citep{schulman2017proximal}, which is defined with a KL-divergence constraint, although the choice of distance metric is not unique. For example, an $\ell_2$ approximation of the KL-divergence constraint, resulting in the unweighted constraint: $\|\pi - \pi_0\|_2^2 \leq \epsilon$. Another widely used technique is the natural policy gradient, as implemented in the Trust Region Policy Optimization (TRPO) algorithm \citep{schulman2015trust}. TRPO adjusts the constraint based on the Fisher information matrix $\mathcal{I}$, leading to the constraint: $(\pi - \pi_0)^\top \mathcal{I} (\pi - \pi_0) \leq \epsilon$,
where $\mathcal{I}$ adapts the penalization according to the sensitivity of the policy.
\end{remark}
 In our experiments, we use a variance-adjusted KL-divergence constraint:
\begin{align*} \mathbb{E}_{x \sim \mathcal{D}, y \sim \pi(\cdot|x)} \left[ \sigma^2(x, y) \ln \frac{\pi(y|x)}{\pi_0(y|x)} \right] \leq \epsilon. \end{align*}
This formulation integrates seamlessly with existing PPO subroutines, such as those provided in the \textbf{TRL} Library \citep{vonwerra2022trl} \footnote{\href{https://github.com/huggingface/trl/tree/main}{\textbf{TRL} package from \textbf{Hugging Face}}}.

\section{Theoretical Analysis}

We compare the performance of the variance-aware LLM alignment methodology with its variance-unaware counterpart to evaluate how incorporating reward estimate uncertainty affects policy robustness and effectiveness, especially in scenarios with noisy reward estimates. We consider two policies, $\pi_1$ and $\pi_2$, derived from different optimization formulations.

\begin{definition}[Variance-Unaware Policy, $\pi_1$]\label{eq:pi1} The policy obtained by solving the unweighted $l_2$ constraint problem:
\begin{equation*}
    \pi_1 = \arg\max_\pi \, \pi^\top \hat{R} \quad \text{subject to} \quad \|\pi - \pi_0\|_2^2 \leq \epsilon.
\end{equation*}
\end{definition}
\begin{definition}[Variance-Aware Policy, $\pi_2$]\label{eq:pi2} The policy obtained by solving the variance weighted $l_2$ constraint problem:
\begin{equation*}
    \pi_2 = \arg\max_\pi \, \pi^\top \hat{R} \quad \text{subject to} \quad \|\pi - \pi_0\|_\Sigma^2 \leq \tilde{\epsilon}.
\end{equation*}
\end{definition}
To compare both methods fairly, we set $\tilde{\epsilon} = \lambda_{\min}(\Sigma) \cdot \epsilon$; this has the effect of aligning the largest ellipsoid of the covariance-weighted constraint with the sphere of the traditional $\ell_2$ constraint.

\paragraph{Main Result} We evaluate the expected true rewards $\pi_i^\top r^*$ for $i = 1,2$, where $r^*$ is the true (unknown) reward vector for both methods and compare them to $\pi_0^\top r^*$. We aim to show that $\pi_2$ is less likely to underperform relative to $\pi_0$ than $\pi_1$, indicating that the variance-aware method is less risky when reward estimates are uncertain. 
\begin{restatable}{theorem}{risk}
    \label{thm:main}
Consider policies $\pi_1$ and $\pi_2$ as defined in Definitions \ref{eq:pi1} and \ref{eq:pi2} respectively. With $\tilde{\epsilon}$ set as $\lambda_{\min}(\Sigma) \epsilon$ to ensure the optimization domain of the variance-aware method is only as large as the variance unaware method, we have the following result: 
\begin{align*}
     \mathbb{P}\left( \pi_2^\top r^* \leq \pi_0^\top r^* \right) \leq \mathbb{P}\left( \pi_1^\top r^* \leq \pi_0^\top r^* \right).
\end{align*}
\end{restatable}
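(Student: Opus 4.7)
The plan is to reduce each probability to a one-dimensional Gaussian tail, obtain closed-form expressions for $\pi_1$ and $\pi_2$ via KKT, and then collapse the inequality to a single application of Cauchy--Schwarz.

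First I would rewrite the events in terms of the updates $d_i := \pi_i - \pi_0$, so that $\pi_i^\top r^* \leq \pi_0^\top r^*$ is equivalent to $d_i^\top r^* \leq 0$. The two definitions then become linear maximizations of $\hat R^\top d$ over an $\ell_2$-ball and a $\Sigma$-weighted ball respectively. Both optima lie on the boundary, and standard Lagrange-multiplier calculations yield the closed forms
\begin{align*}
d_1 \;=\; \sqrt{\epsilon}\,\frac{\hat R}{\|\hat R\|_2}, \qquad d_2 \;=\; \sqrt{\frac{\tilde\epsilon}{\hat R^\top \Sigma^{-1}\hat R}}\,\Sigma^{-1}\hat R.
\end{align*}
Since both prefactors are strictly positive (a.s.), $\operatorname{sign}(d_1^\top r^*) = \operatorname{sign}(\hat R^\top r^*)$ and $\operatorname{sign}(d_2^\top r^*) = \operatorname{sign}(\hat R^\top \Sigma^{-1} r^*)$. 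In particular, the value of $\tilde\epsilon$ actually plays no role in the risk probability; its choice only enforces the ``fair comparison'' framing that the feasible ellipsoid of $\pi_2$ is contained in the $\ell_2$-ball of $\pi_1$.

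Next I would use Assumption~\ref{assm: gaussian} to identify the two relevant scalar distributions: since $\hat R \sim \mathcal N(r^*, \Sigma)$,
\begin{align*}
\hat R^\top r^* \;\sim\; \mathcal N\bigl(\|r^*\|_2^2,\; r^{*\top}\Sigma\, r^*\bigr), \qquad \hat R^\top \Sigma^{-1} r^* \;\sim\; \mathcal N\bigl(r^{*\top}\Sigma^{-1} r^*,\; r^{*\top}\Sigma^{-1} r^*\bigr).
\end{align*}
Standardizing and applying monotonicity of $\Phi$, the desired inequality $\mathbb{P}(d_2^\top r^* \leq 0) \leq \mathbb{P}(d_1^\top r^* \leq 0)$ becomes
\begin{align*}
\sqrt{r^{*\top}\Sigma^{-1} r^*} \;\geq\; \frac{\|r^*\|_2^2}{\sqrt{r^{*\top}\Sigma r^*}}, \qquad \text{i.e.,} \qquad \bigl(r^{*\top}\Sigma^{-1} r^*\bigr)\bigl(r^{*\top}\Sigma r^*\bigr) \;\geq\; \|r^*\|_2^4.
\end{align*}

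The final step is a one-line Cauchy--Schwarz applied to $u = \Sigma^{1/2} r^*$ and $v = \Sigma^{-1/2} r^*$: $\|u\|_2^2 \|v\|_2^2 = (r^{*\top}\Sigma r^*)(r^{*\top}\Sigma^{-1} r^*)$ while $\langle u, v\rangle = \|r^*\|_2^2$, giving the required bound (with equality iff $u \parallel v$, i.e., $r^*$ is an eigenvector of $\Sigma$). The main obstacle is a bookkeeping one: verifying that the closed-form maximizers are the actual solutions (i.e., $\hat R \neq 0$ a.s., $\Sigma$ is invertible since its diagonal entries are positive, and any implicit simplex structure on $\pi$ is not active --- consistent with the paper's treatment of $\pi$ as an element of $\mathbb{R}^{XY}$). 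Once the KKT step is granted, the remainder of the proof is essentially a distributional identification followed by a single Cauchy--Schwarz inequality.
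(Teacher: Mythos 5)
Your proposal matches the paper's proof essentially step for step: closed-form maximizers via Lagrange multipliers, reduction of each underperformance event to the sign of $\hat R^\top r^*$ or $\hat R^\top \Sigma^{-1} r^*$, identification of the two Gaussian laws, and a final Cauchy--Schwarz applied to $\Sigma^{1/2} r^*$ and $\Sigma^{-1/2} r^*$. Your added observations (that $\tilde\epsilon$ cancels out of the risk probability, and the equality condition for Cauchy--Schwarz) are correct refinements the paper leaves implicit.
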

\begin{remark} Thus, the variance-aware method ($\pi_2$) has a lower probability of underperforming relative to $\pi_0$ than the variance-unaware method ($\pi_1$). Theorem~\ref{thm:main} highlights the trade-off between risk and reward. While the variance-unaware policy ($\pi_1$) may achieve higher rewards when $\hat{R}$ is accurate, it is riskier as it ignores estimate uncertainty. The variance-aware policy ($\pi_2$) reduces underperformance risk by accounting for reward estimate variance. The proof of the theorem is presented in Appendix \ref{sec:Proofs}.
\end{remark}
\begin{remark}
Our variance-aware policy is closely related to another reward-to-variability ratio known in finance literature as the Sharpe Ratio \citep{sharpe1966mutual}, which balances expected return against risk. 
\begin{restatable}{theorem}{sharpe}
Consider the optimization problem:
\begin{align*}
    \max_\pi \quad & \mathbb{E}_{x \sim \mathcal{D},\, y \sim \pi(\cdot|x)} \left[ \hat{R}(x,y) \right] \\
    \text{subject to} \quad & \mathbb{E}_{x \sim \mathcal{D},\, y \sim \pi(\cdot|x)} \left[ \sigma^2(x,y) \ln \frac{\pi(y|x)}{\pi_0(y|x)} \right] \leq \epsilon,
\end{align*}
where $\hat{R}(x, y)$ and $\sigma^2(x, y)$ are the reward estimate and its variance. The optimal policy is:
\begin{align*}
    \pi^*(y|x) \propto \pi_0(y|x) \exp\left( \frac{\hat{R}(x, y)}{\beta \sigma^2(x, y)} \right),
\end{align*}
for some $\beta > 0$.
\end{restatable}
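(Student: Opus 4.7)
The plan is to solve this constrained program by Lagrangian duality and KKT. First, I would verify convexity: the objective $\pi \mapsto \mathbb{E}[\hat{R}(x,y)]$ is linear in $\pi$, and the constraint functional $\pi \mapsto \mathbb{E}\bigl[\sigma^2(x,y)\ln(\pi(y|x)/\pi_0(y|x))\bigr]$ is a nonnegatively weighted sum of relative-entropy-type terms, hence convex in $\pi$. Slater's condition is satisfied trivially by taking $\pi = \pi_0$ (for which the constraint value is $0 \le \epsilon$), so strong duality holds and any KKT point is globally optimal.

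Next, I would attach a scalar multiplier $\beta \ge 0$ to the weighted-KL constraint and per-prompt multipliers $\mu(x)$ to the simplex constraints $\sum_y \pi(y|x)=1$, and write
\[
L(\pi,\beta,\mu) = \sum_{x,y} \mathcal{D}(x)\pi(y|x)\hat{R}(x,y) - \beta\!\sum_{x,y} \mathcal{D}(x)\pi(y|x)\sigma^2(x,y)\ln\frac{\pi(y|x)}{\pi_0(y|x)} - \sum_x \mu(x)\Bigl(\sum_y \pi(y|x) - 1\Bigr).
\]
Stationarity $\partial L/\partial \pi(y|x)=0$ yields
\[
\mathcal{D}(x)\hat{R}(x,y) - \beta\mathcal{D}(x)\sigma^2(x,y)\Bigl(\ln\tfrac{\pi(y|x)}{\pi_0(y|x)}+1\Bigr) - \mu(x) = 0,
\]
which rearranges to
\[
\pi^*(y|x) = \pi_0(y|x)\exp\!\left(\frac{\hat{R}(x,y)}{\beta\sigma^2(x,y)}\right)\cdot \exp\!\left(-1-\frac{\mu(x)}{\beta\mathcal{D}(x)\sigma^2(x,y)}\right),
\]
and collecting the last factor into a per-prompt normalizer (chosen so that $\sum_y \pi^*(y|x)=1$) gives the claimed proportionality. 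Finally, $\beta>0$ is fixed by complementary slackness: the unconstrained maximizer of the linear objective is degenerate, so the weighted-KL constraint must be active, which pins down a unique $\beta$.

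The main obstacle is the absorption step. In standard KL-constrained RL, $\mu(x)$ becomes the log-partition function cleanly because the KL carries a $y$-independent weight on $\ln(\pi/\pi_0)$. Here the weight $\sigma^2(x,y)$ depends on $y$, so $\mu(x)$ enters the exponent divided by $\sigma^2(x,y)$ rather than as a pure per-$x$ constant. The honest form of the optimum is thus
\[
\pi^*(y|x) \propto \pi_0(y|x)\exp\!\left(\frac{\hat{R}(x,y)-c(x)}{\beta\sigma^2(x,y)}\right),
\]
where the offset $c(x)$ is implicitly defined by the simplex condition. I would therefore either state the theorem with $c(x)$ made explicit and argue it can be absorbed into the proportionality, or appeal to the Donsker--Varadhan variational formula applied to the $\sigma^2$-tilted reference measure to present the normalization intrinsically. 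Once this normalization is handled, the Sharpe-ratio interpretation follows directly from the exponent, where $\hat{R}(x,y)/\sigma^2(x,y)$ plays the role of a reward-to-variance score.
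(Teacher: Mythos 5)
Your route is the same Lagrangian/KKT route the paper takes, but you execute it far more carefully, and the obstruction you flag is real. The paper's own proof consists of rewriting the Lagrangian as $\argmax_\pi \mathbb{E}\bigl[\hat{R}(x,y)/(\beta\sigma^2(x,y)) - \ln(\pi(y|x)/\pi_0(y|x))\bigr]$ and deferring to the DPO derivation (Rafailov et al., Appendix A.1). That rewriting divides the integrand pointwise by $\beta\sigma^2(x,y)$, which is not an equivalence transformation of the optimization problem when $\sigma^2$ varies with $y$: it reweights the contribution of each $(x,y)$ pair and changes the argmax. Your stationarity computation shows what actually happens: the simplex multiplier enters as $\exp\bigl(-\mu(x)/(\beta\mathcal{D}(x)\sigma^2(x,y))\bigr)$, which is $y$-dependent, so the exact optimizer is $\pi^*(y|x)\propto \pi_0(y|x)\exp\bigl((\hat{R}(x,y)-c(x))/(\beta\sigma^2(x,y))\bigr)$ and \emph{not} the form claimed in the theorem (the two coincide only when $\sigma^2(x,\cdot)$ is constant in $y$, i.e., exactly the unweighted-KL case the cited reference covers).

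The one weak point in your write-up is the first of your two proposed remedies: $c(x)$ \emph{cannot} be "absorbed into the proportionality," for precisely the reason you yourself give --- the factor $\exp\bigl(-c(x)/(\beta\sigma^2(x,y))\bigr)$ depends on $y$ through $\sigma^2(x,y)$, so it is not a per-prompt normalizing constant. The honest conclusion is that the theorem should be restated with the offset $c(x)$ (implicitly defined by $\sum_y\pi^*(y|x)=1$) left inside the exponent, or restricted to the homoscedastic-in-$y$ case. Your convexity and Slater's-condition checks are correct and are a genuine improvement over the paper, which verifies neither; the Sharpe-ratio interpretation of the exponent survives in either formulation.
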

The proof is presented in Appendix \ref{sec:Proofs}. Thus, the optimal policy is proportional to $\frac{\hat{R}(x, y)}{\sigma^2(x, y)}$, which is also known as the Sharpe Ratio, which measures the return of an investment after adjusting for it's risk.
\end{remark}

\begin{figure}[!t]
    \centering
    \begin{minipage}{0.4\textwidth} % Adjust width for each figure
        \centering
        \includegraphics[width=\linewidth]{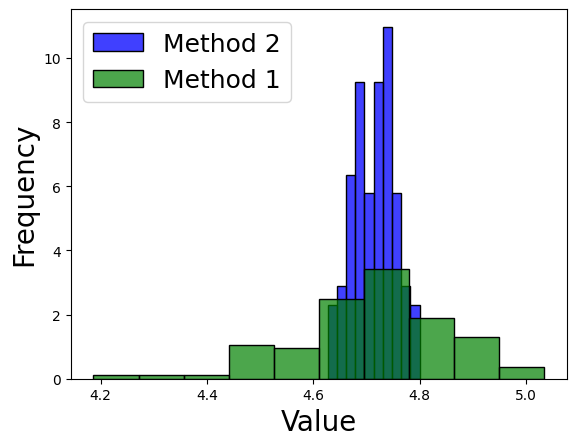}
        \caption{\footnotesize{In the high-variability setting, variances of reward estimates range between $(3,100)$. Method 2 (variance-aware) exhibits significantly lower return variance than Method 1 (variance-unaware), confirming its risk-averse nature. The standard deviation for Method 2 is $0.04$, while for Method 1 it is $0.13$. The mean returns for both methods are comparable: $4.643$ for Method 1 and $4.644$ for Method 2.}}
        \label{fig:high_var}
    \end{minipage}%
    \hspace{0.04\textwidth} % Add space between the two minipages
    \begin{minipage}{0.4\textwidth} % Adjust width for each figure
        \centering
        \includegraphics[width=\linewidth]{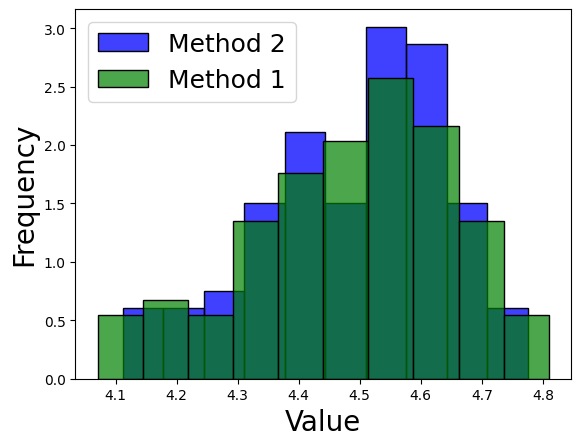}
        \caption{\footnotesize{In the low-variability setting, variances of reward estimates range between $(70,100)$. Both methods perform similarly, with Method 2 (variance-aware) having a standard deviation of $0.12$ and Method 1 (variance-unaware) having a standard deviation of $0.14$. The mean returns for Method 1 and Method 2 are $0.14$ and $0.13$, respectively.}}
        \label{fig:low_var}
    \end{minipage}
    \caption{\footnotesize{Distribution of policy returns under different variability settings. In both cases, the true reward vector $r^*$ is fixed, and reward estimates $\hat{R}$ are sampled from a multivariate Gaussian distribution with the specified covariance matrices. The histograms show the frequency of policy returns under both methods, illustrating the risk-averse nature of Method 2 in the high-variability setting and the convergence of both methods in the low-variability setting.}}
    \label{fig:distribution}
\end{figure}

%\vspace*{-0.2cm}
\paragraph{Variability in the Variance}
The variance-aware method's advantages are more significant when reward estimate variances vary across prompt-response pairs. If variances are homogeneous, both methods perform similarly since the covariance-weighted constraint becomes proportional to the traditional $\ell_2$ constraint.
We conduct simulations to illustrate the benefits of the variance-aware method. We fix a true reward vector $r^*$ (dimension 1000) and sample reward estimates $\hat{R}$ from $\mathcal{N}(r^*, \Sigma)$ under two settings: high and low variance variability. In the high-variability setting (Figure~\ref{fig:high_var}), the variance-aware method ($\pi_2$) shows significantly lower return variance compared to the variance-unaware method ($\pi_1$), confirming its risk-averse nature. In the low-variability setting (Figure~\ref{fig:low_var}), both methods perform similarly, aligning with theoretical predictions. These results confirm our theoretical insights and demonstrate the practical utility of variance-aware policy optimization in aligning LLMs with human preferences.

\section{Reward Modeling}

In this section, we discuss the process of reward modeling using the \textbf{Gemma-2B-it} model \citep{team2024gemma}, an instruction-tuned version of the foundational model \textbf{Gemma-2B}. Our reward modeling methodology uses an ensemble of models, specifically 10 independent reward models, to compute the reward variance across different instances of the same prompt-response pair. This ensemble-based approach allows us to better capture the uncertainty in the reward estimates and to analyze the variability between otherwise identical reward models. The following paragraphs detail the methodology used to learn the ensemble of reward models, the dataset used for training and evaluation, and the observations drawn from the ensemble's performance across multiple benchmarks.

%\vspace*{-0.3cm}
\paragraph{Dataset}
To train our reward models, we utilize an existing open-source preference dataset \citep{dong2024rlhf}, which is available publicly via HuggingFace\footnote{\href{https://huggingface.co/datasets/weqweasdas/preference_dataset_mix2}{huggingface.co/weqweasdas/preference\_dataset\_mix2}}. This curated dataset contains approximately $50,000$ labeled preference pairs. It is constructed by combining several well-known, open-source datasets. The included datasets are \textbf{HH-RLHF} \citep{bai2022training}, \textbf{SHP} \citep{ethayarajh2022understanding}, \textbf{HelpSteer} \citep{wang2023helpsteer}, \textbf{PKU-SafeRLHF} \citep{ji2024beavertails}, \textbf{UltraFeedback} \citep{cui2023ultrafeedback}, \textbf{UltraInteract} \citep{yuan2024advancing}, \textbf{Distilabel-Capybara} \citep{daniele2023suphavadeeprasit}, and \textbf{Distilabel-Orca3} \citep{lian2023openorca}. 
The combined dataset has undergone preprocessing to filter out sub-quality data, specifically removing $10\%$ of the original dataset to ensure the quality of the training samples. The final dataset contains human preferences where, for each prompt, two responses are given: one preferred and the other rejected. The preference labels serve as the ground truth for training our ensemble of reward models.
This dataset provides a comprehensive and diverse set of prompt-response pairs, making it suitable for training a robust reward model ensemble that can generalize across various domains and tasks. We refer readers to the original work of \citet{dong2024rlhf} for further details on the dataset construction and preprocessing steps.

%\vspace*{-0.3cm}
\paragraph{Methodology} We use the \textbf{Gemma-2B-it} \citep{gemma-2b-it} model as the foundation for our reward models. The instruction-tuned nature of this model makes it a strong candidate for reward modeling tasks, as it has been fine-tuned to follow human instructions closely. The size of \textbf{Gemma-2B-it} is approximately $9.34$ GB on disk, including a scalar reward head. Given that we use an ensemble of $10$ independent reward models, the total storage required for all models is approximately $90$ GB.
To accelerate the training process and optimize memory usage, we employ the following methodology:
\begin{itemize}
%\vspace*{-0.2cm}
    \item \textbf{Initial Training:} We begin by training a single instance of the full \textbf{Gemma-2B-it} model with a scalar reward head on the preference dataset. The reward head is a simple linear layer with dimensions $2048 \times 1$. We use \emph{early-stopping} during training to prevent overfitting and ensure generalization. Specifically, we stop training when the loss reaches $0.3$, as this strikes a balance between model complexity and the risk of overfitting.
%\vspace*{-0.2cm}    
    \item \textbf{Parallel Reward Heads:} Once the initial model is partially trained, we attach $9$ additional reward heads in parallel with the original reward head \citep{zhang2024improving}. Each reward head is a linear layer with the same dimensions as the first ($2048 \times 1$). The model now outputs a 10-dimensional vector, where each element corresponds to the reward output of one of the 10 models in the ensemble. This configuration allows us to efficiently compute the rewards for all models in a single forward pass.
%\vspace*{-0.2cm}    
    \item \textbf{Freezing the Foundation Model:} To reduce computational complexity and ensure faster training, we freeze the weights of the foundation model (i.e., the pre-trained layers of \textbf{Gemma-2B-it}) and train only the reward heads. This allows us to simulate training 10 independent reward models in parallel while sharing the foundation model across all reward heads. We employ an additive loss function during training:
    $\text{loss} = \sum_{i=1}^{10} l(\theta_i)$,
    where each $\theta_i$ represents the parameters of the $i$-th reward head. This approach ensures that all reward heads are trained independently but computationally efficiently. In this sense, our methodology differs from the one used in \citet{zhang2024improving}.
\end{itemize}
By freezing the foundational layers and focusing the training on the reward heads, we can significantly reduce the computational and storage costs associated with training an ensemble of models. The final ensemble model occupies approximately $9.34$ GB on disk, and the total number of trainable parameters across all reward heads is $20,480$. 

\begin{table*}[!t]
    \centering
    \resizebox{\textwidth}{!}{%
    \begin{tabular}{|c|c|c|c|c|c|c|}
        \hline
         Model &   Average Score &  \textbf{Chat} & \textbf{Chat-Hard} & \textbf{Safety} & \textbf{Reasoning} & \textbf{Prior Sets} \\
         \hline
         \href{https://huggingface.co/Ray2333/GRM-Gemma-2B-sftreg}{GRM-Gemma-2B-sftreg} \citep{yang2024regularizing} & 74.7 & 95.5 & 48.7 & 80.0 & 76.8 & 69.8\\
         \hline
         \href{https://huggingface.co/Ray2333/Gemma-2B-rewardmodel-baseline}{Gemma-2B-rewardmodel-baseline} & 73.1 & 94.1 & 46.9 & 79.7 & 73.8 & 69.0 \\
         \hline
         \textbf{Our Model} & 69.4 & \textbf{95.6} & 44.5 & 55.9 & 81.8 & 69.0\\
         \hline
         \href{https://huggingface.co/Qwen/Qwen1.5-72B-Chat}{Qwen1.5-72B-Chat} \citep{qwen} & 68.2 & 62.3 & 66.0 & 72.0 & 85.5 & 42.3\\
         \hline
         \href{https://huggingface.co/openbmb/MiniCPM-2B-dpo-fp32}{MiniCPM-2B-dpo-fp32} \citep{hu2024minicpm} & 66.2 & 89.1 & 49.3 & 52.5 & 82.3 & 49.6\\
         \hline
         \href{https://huggingface.co/weqweasdas/RM-Gemma-2B}{RM-Gemma-2B} \citep{dong2023raft}& 64.2 & 94.4 & 40.8 & 44.0 & 76.4 & 66.5\\
         \hline
    \end{tabular}%
    }
    \caption{\footnotesize{Comparison of our ensemble of reward models to other SOTA $2$B models on the \href{https://huggingface.co/spaces/allenai/reward-bench}{\textbf{RewardBenchmark}} platform. The \textbf{Prior Sets} are given $50$\% weightage in the final score. Our model shows competitive performance compared to others, highlighting its efficacy in reward modeling tasks.}}
    \label{tab:benchmark}
    %\vspace*{-0.3cm}
\end{table*}

\paragraph{Evaluation} To assess the performance of our ensemble reward models, we utilize the \textbf{RewardBenchmark} platform \citep{lambert2024rewardbench}\footnote{\href{https://huggingface.co/spaces/allenai/reward-bench}{https://huggingface.co/allenai/reward-bench}}, a widely-used platform that offers curated datasets and evaluation metrics specifically designed for benchmarking reward models. This platform provides an in-depth evaluation across multiple datasets, each designed to test different aspects of reward modeling, such as conversational ability, safety, and reasoning. The evaluation is conducted on four primary datasets: \textbf{Chat} \citep{li2023alpacaeval, zheng2023judging}, \textbf{Chat-Hard} \citep{zheng2023judging}, \textbf{Safety} \citep{rottger2023xstest, dong2023raft}, and \textbf{Reasoning} \citep{muennighoff2023octopack, lightman2023let}. Additionally, there is a fifth dataset called \textbf{Prior}, which consists of subsets of various other datasets including \textbf{Anthropic Helpful} \citep{bai2022training}, \textbf{BIG-Bench} \citep{askell2021general}, \textbf{Stanford Human Preferences (SHP)} \citep{ethayarajh2022understanding} and \textbf{Learning to Summarize} \citep{stiennon2020learning} and is given a 50\% weightage in the overall score. The platform evaluates models based on a comprehensive list of metrics, providing a holistic view of the model's ability to predict human preferences.  We refer readers to the original work for a more detailed explanation of the dataset composition. We compare the average performance of our ensemble model to other state-of-the-art (SOTA) models with similar model sizes (2B parameters). Table~\ref{tab:benchmark} summarizes the results of this comparison. Our ensemble reward model demonstrates performance comparable to other SOTA 2B models, confirming its efficacy as a reliable reward estimation framework.

\begin{figure*}[t]
    \makebox[1.02\textwidth][c]{%
        \begin{minipage}{1.1\textwidth} % Adjust 1.2 to control the extension
            \centering
            \begin{subfigure}[\footnotesize{\textbf{Chat}}]{\includegraphics[width=0.25\linewidth]{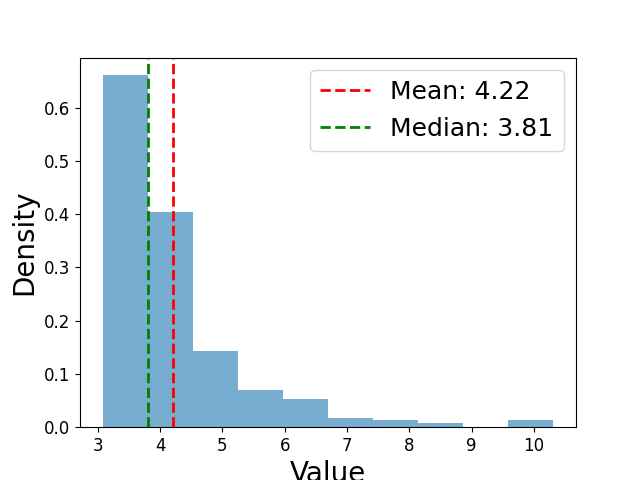}}   \end{subfigure}
            \hspace*{-0.3cm}
            \begin{subfigure}[\footnotesize{\textbf{Chat Hard}}]{\includegraphics[width=0.25\linewidth]{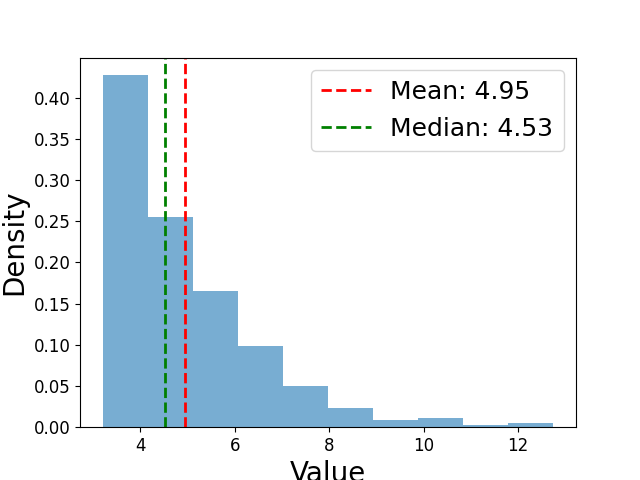}}   
            \end{subfigure}
            \hspace*{-0.3cm}
            \begin{subfigure}[\footnotesize{\textbf{Safety}}]{\includegraphics[width=0.25\linewidth]{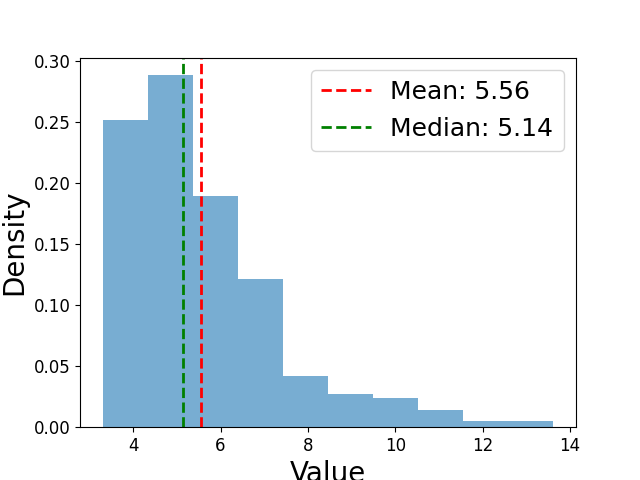}}  \end{subfigure}
            \hspace*{-0.3cm}
            \begin{subfigure}[\footnotesize{\textbf{Reasoning}}]{\includegraphics[width=0.25\linewidth]{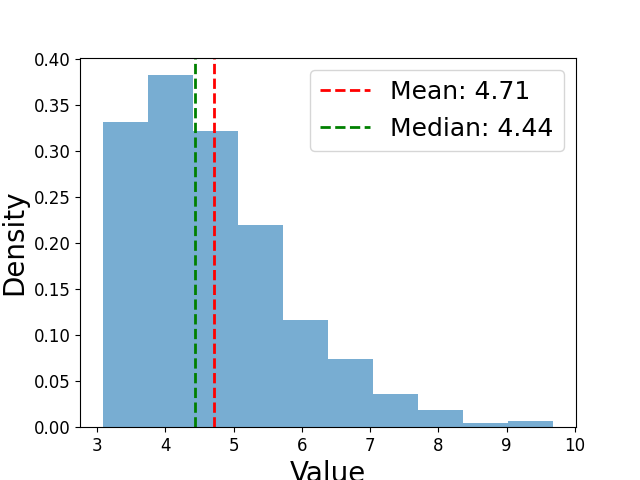}}   
            \end{subfigure} \\
            \begin{subfigure}[\footnotesize{\textbf{Chat}}]{\includegraphics[width=0.25\linewidth]{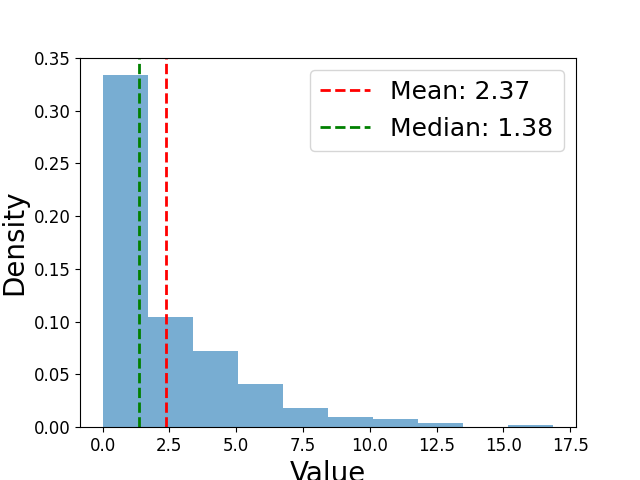}}   \end{subfigure}
            \hspace*{-0.3cm}
            \begin{subfigure}[\footnotesize{\textbf{Chat Hard}}]{\includegraphics[width=0.25\linewidth]{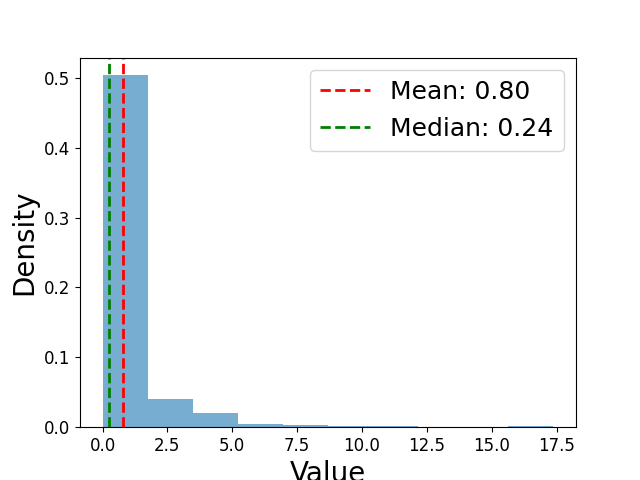}}   
            \end{subfigure}
            \hspace*{-0.3cm}
            \begin{subfigure}[\footnotesize{\textbf{Safety}}]{\includegraphics[width=0.25\linewidth]{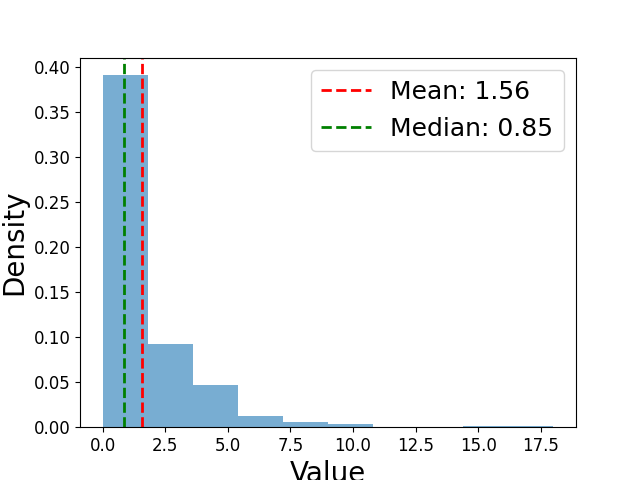}}  \end{subfigure}
            \hspace*{-0.3cm}
            \begin{subfigure}[\footnotesize{\textbf{Reasoning}}]{\includegraphics[width=0.25\linewidth]{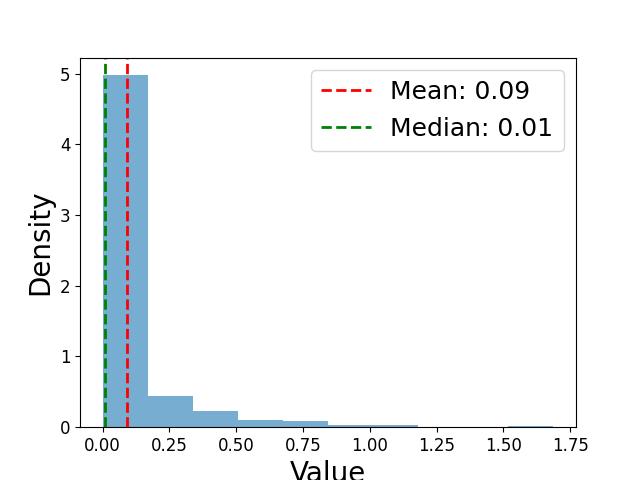}}   
            \end{subfigure}
        \end{minipage}%
    }
    \caption{(\footnotesize{\textit{Top Row}) The distribution of sample variances of the reward on the accepted responses. The $10$ reward models calculate the sample variance. We note from the median of the sample variances that half of the dataset tends to have variances of the rewards greater than $3.81$, with a maximum close to $10$. This corroborates our hypothesis that different reward models will exhibit variability in their reward assignments for the same prompt-response pair. (\textit{Bottom Row}) The distribution of sample variance of the rewards difference between accepted and rejected responses. The figure shows that the reward models are not merely translations of one another, and the variance arises due to the statistical nature of learning these reward models and the stochasticity of the optimization process.}}
    %\vspace*{-0.1cm}
    \label{fig:reward_modelling}
\end{figure*}

\paragraph{Observations} To corroborate our hypothesis that identically trained reward models disagree on the same prompt-response pair, we run our experiment on the $4$ datasets provided in the \textbf{RewardBenchmark} platform, namely \textbf{Chat}, \textbf{Chat-Hard}, \textbf{Safety} and \textbf{Reasoning} datasets. For example, the \textbf{Chat} dataset contains $358$ prompt-response pairs in the form $(x, y^1, y^2)$, where $y^1$ is the accepted response, and $y^2$ is the rejected response. The \textbf{Chat} dataset is a mixture of multiple sources, including \textbf{AlpacaEval Easy}, \textbf{AlpacaEval}, \textbf{AlpacaEval Hard} \citep{li2023alpacaeval}, \textbf{MT Bench Easy}, and \textbf{MT Bench Medium} \citep{zheng2023judging}. The composition of the other datasets can be found in the original work of \citet{lambert2024rewardbench}.
We analyze the variance of the rewards assigned to the accepted responses across the $10$ models in the ensemble. For each prompt $x$, we compute the reward for the accepted response $r_i(x, y^1)$ using the $i$-th reward model. We continue to compute the sample variance of the rewards for each accepted response across the 10 models and plot the distribution of the sample variance of the entire dataset. The top row of Figure~\ref{fig:reward_modelling} shows the histogram of the computed sample variances in each dataset.
We observe that the variances in the rewards range between $3$ and $14$, with a mean variance greater than $4$ and a median variance greater than $3$ for each dataset. This indicates that there is non-negligible variability in the rewards assigned by the different models in the ensemble, even though the models are trained on the same dataset. This lack of uniformity can be attributed to factors such as the finite size of the training data and the inherent stochasticity of the optimization process used during training. These findings align with our hypothesis that different reward models can exhibit notable disagreement in their reward assignments for the same prompt-response pair, even when trained on identical data. To further explore this variability, we analyze the variance distribution of the differences between the rewards assigned to the accepted and rejected responses. The bottom row of Figure~\ref{fig:reward_modelling} presents this distribution, illustrating that the reward models are not simply translations of one another. Translationally invariant models would exhibit no differences in rewards, leading to a Dirac distribution centered at zero. However, the distribution as observed shows that this is not the case, supporting the notion that the observed variance arises from the statistical and stochastic nature of the learning process.

\section{Proximal Policy Optimization (PPO)}
\label{sec:PPO}

This section describes our methodology for fine-tuning the \textbf{GPT-2} \citep{radford2019language} language model using a variance-aware approach. Our approach builds on the standard Proximal Policy Optimization (PPO) framework \citep{schulman2017proximal}, modified to incorporate uncertainty in the reward estimates. The goal is to demonstrate how accounting for variance in reward models can lead to more robust and safe policies. We note that the reason for choosing \textbf{GPT-2} was based on the ease of performing PPO, as it is known in the literature that training large language models with PPO presents difficulties involving instability and sensitivity to hyperparameters \citep{choshen2019weaknesses}, code-level optimizations \citep{engstrom2020implementation} and resource intensiveness.  

%\vspace*{-0.2cm}
\paragraph{Dataset} For prompt sampling, we use the \textbf{IMDB} dataset \citep{maas-EtAl:2011:ACL-HLT2011}, which is publicly available via Hugging Face\footnote{\href{https://huggingface.co/datasets/stanfordnlp/imdb}{stanfordnlp/imdb}}. The train split of this dataset consists of $25,000$ rows. We sample prompts $x$ from each row with random lengths between $2$ to $8$ tokens. These sampled prompts serve as input to the language model during the training process, where responses are generated and evaluated by our reward models. 

%\vspace*{-0.1cm}
\paragraph{Methodology} We use \textbf{GPT-2} as the base language model for fine-tuning. The responses generated by \textbf{GPT-2} have a maximum length of 10 tokens. For each prompt-response pair $(x, y)$, we compute rewards and variances from each of the $10$ reward models in our ensemble. The reward for a given pair is adjusted by penalizing the score based on the variance-weighted KL divergence between the current policy $\pi$ and the reference policy; that is, the adjusted reward is given by:
$R_i(x, y) = r_i(x, y) - \beta \sigma(x, y) \ln \frac{\pi(y|x)}{\pi_0(y|x)}$,
where $r_i(x, y)$ is the reward from the $i$-th model. Note that this estimate differs from the lower confidence estimate $r_i - \beta \sigma$ used in previous works \citep{zhang2024improving}. Using this variance-weighted reward, we perform PPO to update the policy. For each reward model, we run $4$ independent trials of PPO, resulting in $4$ policies per reward model. We train $40$ independent policies, which we label as the \textit{variance-aware} policies. These policies are compared with another set of policies trained using the conventional PPO method as given in \textbf{TRL} library \citep{vonwerra2022trl}. To ensure a fair comparison between the two methods, we fine-tune the value of $\beta$ experimentally to equalize the KL divergence between the final policy and the reference policy across both sets of policies.

\paragraph{Evaluation} To assess the quality of the trained policies, we evaluate them using a large reward model that serves as a judge. Specifically, we use the \textbf{FsfairX-LLaMA3-RM-v0.1} reward model \citep{dong2023raft, xiong2024iterative}\footnote{\href{https://huggingface.co/sfairXC/FsfairX-LLaMA3-RM-v0.1}{https://huggingface.co/sfairXC/FsfairX-LLaMA3-RM-v0.1}}, which is based on \textbf{LLama-3-8B} and currently ranks $17$ on the \textbf{RewardBenchmark} platform. This reward model acts as an evaluator by scoring the prompt-response pairs generated by the trained policies.
Each of the $40$ policies from the \textit{variance-aware} set is used to generate responses for the test split of the \textbf{IMDB} dataset. The responses are then evaluated by the judge reward model, which assigns an average score for the entire test dataset. This process results in a distribution of average rewards for the \textit{variance-aware} policies.
We repeat the same evaluation for the \textit{vanilla-PPO} policies, generating another reward distribution based on their performance. As a baseline, we also evaluate the performance of the reference policy, \textbf{GPT-2}, using the same reward model. The reward distributions for all three sets of policies are compared and plotted in Figure \ref{fig:reward_dist}.

\begin{figure*}[!t]
    \centering
    \includegraphics[width = \linewidth]{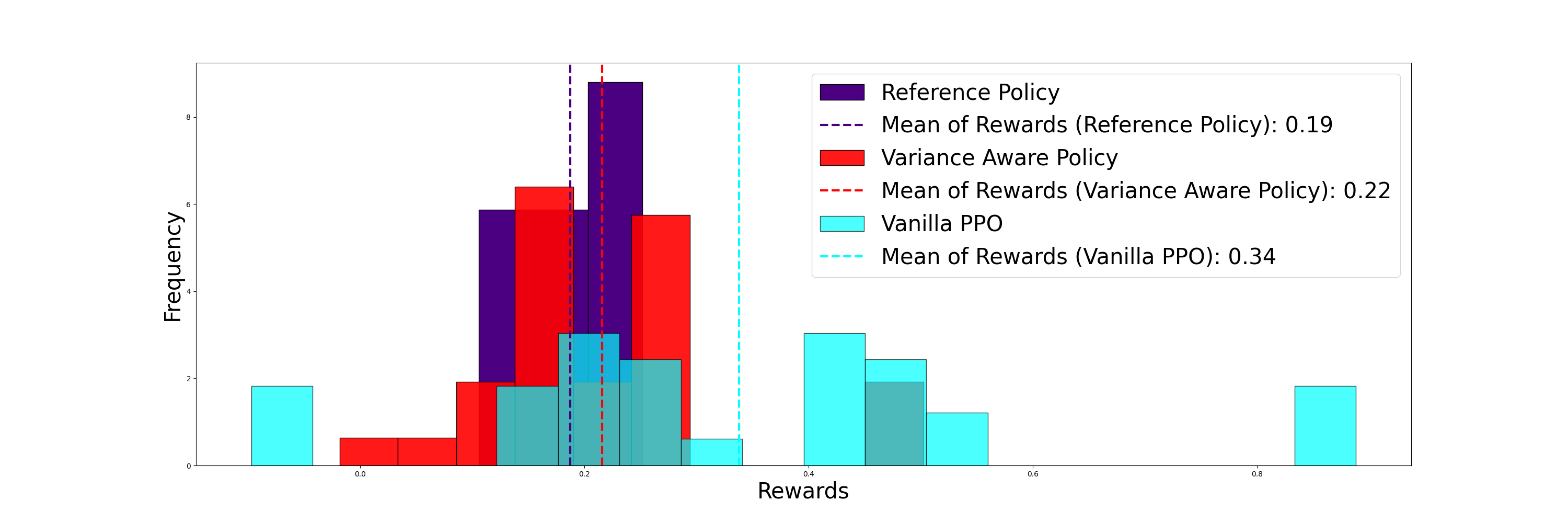}
    \caption{\footnotesize{The reward distribution for the two methods compared with the reference policy's quality. The distribution marked in indigo represents the reward distribution for the reference policy, based on 40 samples of the average reward determined by the judge reward model on responses generated by \textbf{GPT-2}. The reward distribution from the reference policy has a mean of $0.19$ and a variance of $0.002$. The reward distribution for the variance-aware method (in red) has a mean of $0.22$ and a variance of $0.012$. The reward distribution for the vanilla PPO method (in cyan) has a mean of $0.34$ and a variance of $0.06$.}}
    %\vspace{-0.2cm}
    \label{fig:reward_dist}
\end{figure*}

%\vspace*{-0.1cm}
\paragraph{Observations}
In Figure \ref{fig:reward_dist}, indigo marks the true reward distribution of the base or reference policy of \textbf{GPT-2} as measured by the judge reward model. The red marks the true reward distribution of the variance-aware policy, while the cyan marks the true reward distribution of the vanilla PPO policy. As can be seen from the figure, the mean reward of both methods performs better than the reference policy, which has a mean reward of $0.19$. The \textit{Variance-Aware Policy} shows an improvement over the reference policy, with a mean reward of $0.22$ and a variance of $0.012$. These policies are trained to be more conservative, which leads to a more robust, albeit less aggressive, improvement in the reward scores. The vanilla PPO policy demonstrates the highest average reward, with a mean of $0.34$ but also a significantly higher variance of $0.06$. This suggests that while ignoring variance in the reward model can result in larger potential gains, it comes with increased variability and risk, making these policies more sensitive to noise in the reward estimates. The results suggest that the variance-aware approach offers a more stable, risk-averse policy.

%%%%%%%%%%%%%%%%%%%%%%%%%%%%%%%%%%%%%%%%%%%%%%%%%%%%%%%%%%%%

\bibliographystyle{plainnat}
\bibliography{References}

%%%%%%%%%%%%%%%%%%%%%%%%%%%%%%%%%%%%%%%%%%%%%%%%%%%%%%%%%%%%

\section{Proofs}
\label{sec:Proofs}
\surrogate*
\begin{proof}
The result follows from a standard self-normalizing bound for Gaussian random variables. Specifically, for any $\delta > 0$, the following inequality holds with high probability:
\begin{align*}
    \left\| \hat{R} - r^* \right\|_{\Sigma^{-1}} \leq \sqrt{\mathrm{XA}\ln\left(1/\delta\right)},
\end{align*}
with probability at least $1 - \delta$, since $\left\| \hat{R} - r^* \right\|_{\Sigma^{-1}}$ is the self-normalized euclidean norm of a standard Gaussian random variable in $\mathrm{XA}$ dimensions. By applying the Cauchy-Schwarz inequality, we have, for any $d \in \mathrm{D}$:
\begin{align*}
    \left| \langle d, \hat{R} - r^* \rangle \right| \leq \|d\|_\Sigma \left\| \hat{R} - r^* \right\|_{\Sigma^{-1}}.
\end{align*}
Substituting the bound on $\left\| \hat{R} - r^* \right\|_{\Sigma^{-1}}$, we obtain:
\begin{align*}
    \left| \langle d, \hat{R} - r^* \rangle \right| \leq \|d\|_\Sigma \sqrt{\mathrm{XA}\ln\left(1/\delta\right)}.
\end{align*}
This completes the proof.
\end{proof}

\risk*
\begin{proof}
Both optimization problems (\eqref{eq:pi1} and \eqref{eq:pi2}) involve maximizing a linear function over a convex domain. Thus, the maximum occurs at the boundary of the feasible region, allowing us to replace the inequality constraints in \eqref{eq:pi1} and \eqref{eq:pi2} with equality constraints. We can solve these optimization problems using the method of Lagrange multipliers.
For the variance-aware optimization problem \eqref{eq:pi2}, the Lagrangian formulation is:
\begin{align*}
    \pi_2 = \argmax_\pi \left[ \hat{R}^\top \pi - \beta (\pi - \pi_0)^\top \Sigma (\pi - \pi_0) \right],
\end{align*}
where $\beta$ is the Lagrange multiplier associated with the covariance-weighted $\ell_2$ constraint. The solution to this optimization problem is given by:
\begin{align}
\label{eq:pi2_solution}
    \pi_2 = \pi_0 + \frac{1}{2\beta} \Sigma^{-1} \hat{R}.
\end{align}
To satisfy the constraint $\|\pi_2 - \pi_0\|_\Sigma^2 = \tilde{\epsilon}$, we determine $\beta$ as:
\[
    \beta = \frac{1}{2} \sqrt{\frac{\hat{R}^\top \Sigma^{-1} \hat{R}}{\tilde{\epsilon}}}.
\]
Substituting this back into the solution for $\pi_2$ yields:
\[
    \pi_2 = \pi_0 + \sqrt{\frac{\tilde{\epsilon}}{\hat{R}^\top \Sigma^{-1} \hat{R}}} \Sigma^{-1} \hat{R}.
\]
Similarly, for the variance-unaware policy $\pi_1$, solving the optimization problem \eqref{eq:pi1} yields:
\[
    \pi_1 = \pi_0 + \sqrt{\frac{\epsilon}{\hat{R}^\top \hat{R}}} \hat{R}.
\]
Next, we compute the expected true rewards under both policies. The true reward under $\pi_1$ is:
\begin{align*}
    \pi_1^\top r^* &= \pi_0^\top r^* + \sqrt{\frac{\epsilon}{\hat{R}^\top \hat{R}}} \hat{R}^\top r^*,
\end{align*}
and under $\pi_2$, the true reward is:
\begin{align*}
    \pi_2^\top r^* &= \pi_0^\top r^* + \sqrt{\frac{\tilde{\epsilon}}{\hat{R}^\top \Sigma^{-1} \hat{R}}} \hat{R}^\top \Sigma^{-1} r^*.
\end{align*}
Both policies underperform relative to $\pi_0$ if their corresponding rewards are less than or equal to $\pi_0^\top r^*$. For $\pi_1$, this occurs if $\hat{R}^\top r^* \leq 0$, and for $\pi_2$, this occurs if $\hat{R}^\top \Sigma^{-1} r^* \leq 0$.
Since $\hat{R}$ is normally distributed with mean $r^*$ and covariance $\Sigma$, we have:
\begin{align*}
    \hat{R}^\top r^* &\sim \mathcal{N}\left( \|r^*\|^2, r^{*\top} \Sigma r^* \right), \\
    \hat{R}^\top \Sigma^{-1} r^* &\sim \mathcal{N}\left( r^{*\top} \Sigma^{-1} r^*, r^{*\top} \Sigma^{-1} r^* \right).
\end{align*}
Thus, the probabilities of underperformance are given by:
\begin{align*}
    \mathbb{P}\left( \hat{R}^\top r^* \leq 0 \right) &= \Phi\left( -\frac{\| r^* \|^2}{\sqrt{ r^{*\top} \Sigma r^* }} \right), \\
    \mathbb{P}\left( \hat{R}^\top \Sigma^{-1} r^* \leq 0 \right) &= \Phi\left( -\sqrt{ r^{*\top} \Sigma^{-1} r^* } \right),
\end{align*}
where $\Phi$ is the standard normal cumulative distribution function.
Using the Cauchy-Schwarz inequality:
\begin{equation*}
\begin{split}
    \| r^* \|^2 &= r^{*\top} \Sigma^{-1/2} \Sigma^{1/2} r^* \\
    &\leq \left\| \Sigma^{-1/2} r^* \right\| \left\| \Sigma^{1/2} r^* \right\| \\
    &= \sqrt{ r^{*\top} \Sigma^{-1} r^* } \sqrt{ r^{*\top} \Sigma r^* }.
\end{split}
\end{equation*}
Thus, we conclude:
\begin{align*}
    -\frac{ \| r^* \|^2 }{ \sqrt{ r^{*\top} \Sigma r^* } } \geq -\sqrt{ r^{*\top} \Sigma^{-1} r^* }.
\end{align*}
Since the cumulative distribution function $\Phi$ is increasing, it follows that:
\begin{align*}
    \mathbb{P}\left( \pi_2^\top r^* \leq \pi_0^\top r^* \right) \leq \mathbb{P}\left( \pi_1^\top r^* \leq \pi_0^\top r^* \right).
\end{align*}
\end{proof}

\sharpe*
\begin{proof} The constrained optimization problem can be transformed into an unconstrained optimization problem by introducing a Lagrange multiplier $\beta > 0$:
\begin{align*} \argmax_{\pi} \mathbb{E}_{x \sim \mathcal{D},, y \sim \pi(\cdot|x)} \left[ \frac{\hat{R}(x, y)}{\beta \sigma^2(x, y)} - \ln \frac{\pi(y|x)}{\pi_0(y|x)} \right]. \end{align*}
The proof follows standard techniques and can be found in \cite{rafailov2024direct} (Appendix A.1). 
\end{proof}

\newpage
\section{Experimental Details for Reward Modeling}

The hyperparameter details used in the single reward-head modeling are given in Table 2. Other parameters are kept as in \cite{wolf-etal-2020-transformers}. Table 3 summarizes the hardware specifications and resource consumption during the single reward-head training process, including GPU memory, disk space, and total training time. The model is trained using four NVIDIA A40 GPUs, each with 48 GB of memory. The total disk space for storing the dataset, model checkpoints, and logs is approximately 30 GB. Training time is 51 hours.
\begin{table*}[htbp]
    \centering
    \begin{minipage}{0.45\textwidth}
        \centering
        \begin{adjustbox}{max width=\textwidth}
        \begin{tabular}{|c|c|}
            \hline
            \textbf{Hyperparameter}    & \textbf{Value} \\ \hline
            Effective Batch Size       & 32             \\ \hline
            Learning Rate              & 1e-5           \\ \hline
            Optimizer                  & Paged AdamW 32bit \\ \hline
            Weight Decay               & 0.001          \\ \hline
            LR Scheduler               & cosine         \\ \hline
            Epochs                     & 1              \\ \hline
            Global Train Steps         & 4125           \\ \hline
        \end{tabular}    
        \end{adjustbox}
        \caption{\footnotesize{Hyperparameters used in training the Single Reward Model.}}
        \label{tab:hyperparams}
    \end{minipage}%
    \hspace{0.04\textwidth} % Space between the tables
    \begin{minipage}{0.5\textwidth}
        \centering
        \begin{adjustbox}{max width=\textwidth}
        \begin{tabular}{|c|c|}
            \hline
            \textbf{Resource}          & \textbf{Details} \\ \hline
            GPU Model                  & NVIDIA A40 (40 GB) \\ \hline
            Number of GPUs             & 4               \\ \hline
            Total GPU Memory           & 12.68 GB        \\ \hline
            Total Disk Space Required  & 30 GB           \\ \hline
            Total Training Time        & 51 hours        \\ \hline
        \end{tabular}
        \end{adjustbox}
        \caption{\footnotesize{Hardware requirements for training the single reward model.}}
        \label{tab:hw_requirements}
    \end{minipage}
\end{table*}

The hyperparameter details used in ensemble reward modeling are given in Table 4. Other parameters are kept as in \cite{wolf-etal-2020-transformers}. Table 5 summarizes the hardware specifications and resource consumption during the ensemble training process, including GPU memory, disk space, and total training time. The model is trained using four NVIDIA A40 GPUs, each with 48 GB of memory. The total disk space for storing the dataset, model checkpoints, and logs is approximately 40 GB. Training time is 7 hours.

\begin{table*}[htbp]
    \centering
    \begin{minipage}{0.45\textwidth}
        \centering
        \resizebox{\textwidth}{!}{%
        \begin{tabular}{|c|c|}
            \hline
            \textbf{Hyperparameter}    & \textbf{Value} \\ \hline
            Effective Batch Size       & 32             \\ \hline
            Learning Rate              & 1e-5           \\ \hline
            Optimizer                  & Paged AdamW 32bit \\ \hline
            Weight Decay               & 0.001          \\ \hline
            LR Scheduler               & cosine         \\ \hline
            Epochs                     & 0.5            \\ \hline
            Global Train Steps         & 2060           \\ \hline
        \end{tabular}%
        }
        \caption{\footnotesize{Hyperparameters used in training the Ensemble Reward Model.}}
        \label{table:hyperparams_ensemble}
    \end{minipage}%
    \hspace{0.04\textwidth} % Space between the tables
    \begin{minipage}{0.50\textwidth}
        \centering
        \resizebox{0.9\textwidth}{!}{%
        \begin{tabular}{|c|c|}
            \hline
            \textbf{Resource}          & \textbf{Details} \\ \hline
            GPU Model                  & NVIDIA A40       \\ \hline
            Number of GPUs             & 4               \\ \hline
            Total GPU Memory           & 6.12 GB         \\ \hline
            Total Disk Space Required  & 38 GB           \\ \hline
            Total Training Time        & 7 hours         \\ \hline
        \end{tabular}%
        }
        \caption{\footnotesize{Hardware requirements for training the ensemble reward model.}}
        \label{table:hw_requirements_ensemble}
    \end{minipage}
\end{table*}

Figures \ref{fig:standard_loss} and \ref{fig:ensemble_loss} depict the training loss curves for both the single and ensemble reward models. In particular, we early-stop the fine-tuning of the single reward-head model when the loss dips below the 0.4 mark. We then attach 10 reward heads parallel to the final layer, freeze the base model, and retrain only the reward heads until the average training loss for each reward head is close to 0.2. 

\begin{figure*}[htbp]
\begin{subfigure}[Training Loss for a Single Reward Model. \label{fig:standard_loss}]
{\includegraphics[width=0.5\linewidth]{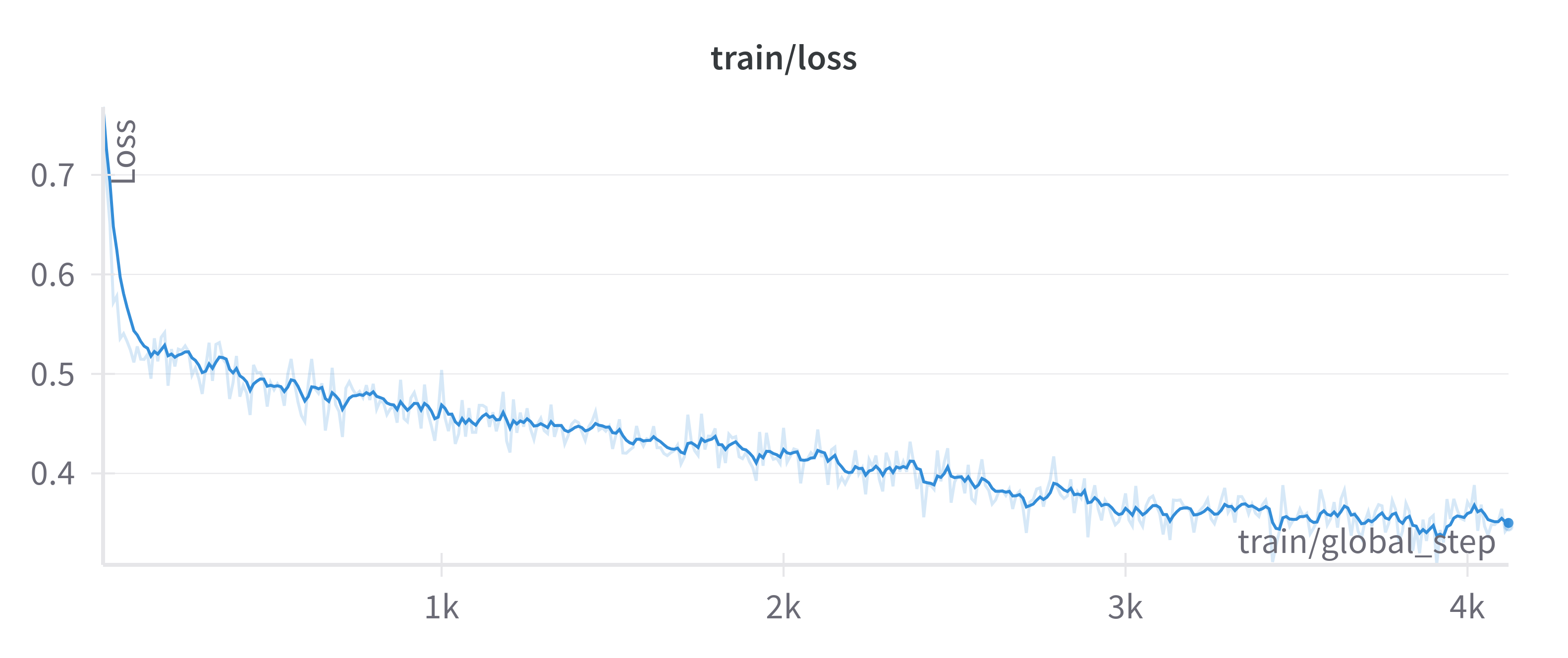}}
\end{subfigure}
\hfill
\begin{subfigure}[Training Loss for an Ensemble of 10 reward models \label{fig:ensemble_loss}]
{\includegraphics[width=0.5\linewidth]{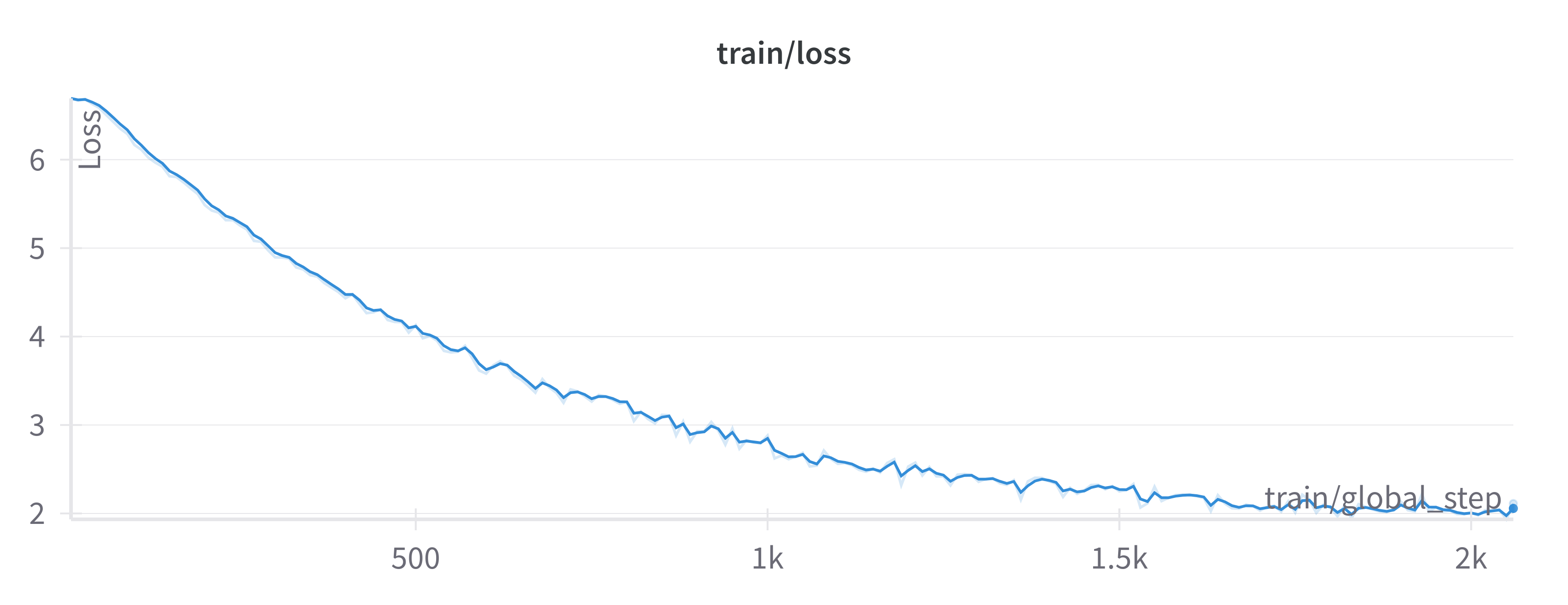}}
\end{subfigure}
\caption{\footnotesize{Training Loss for Reward Modelling}}
\end{figure*}

In Figure \ref{fig:reward_models}, we present the performance of the ten models evaluated across four datasets on the \textbf{RewardBenchmark} platform: \textbf{Chat}, \textbf{Chat-Hard}, \textbf{Reasoning}, and \textbf{Safety}. In particular, we compare these models against a fully fine-tuned single reward head model instead of the ensemble models trained with a frozen base. Our results indicate that the models within the ensemble perform on par with each other and are comparable to the fully fine-tuned single reward head model.
\begin{figure*}[htbp]
    \makebox[\textwidth][c]{%
        \begin{minipage}{1.2\textwidth} % Adjust 1.2 for symmetrical extension
            \centering
            \begin{subfigure}[\textbf{Chat}]{\includegraphics[width=0.25\linewidth]{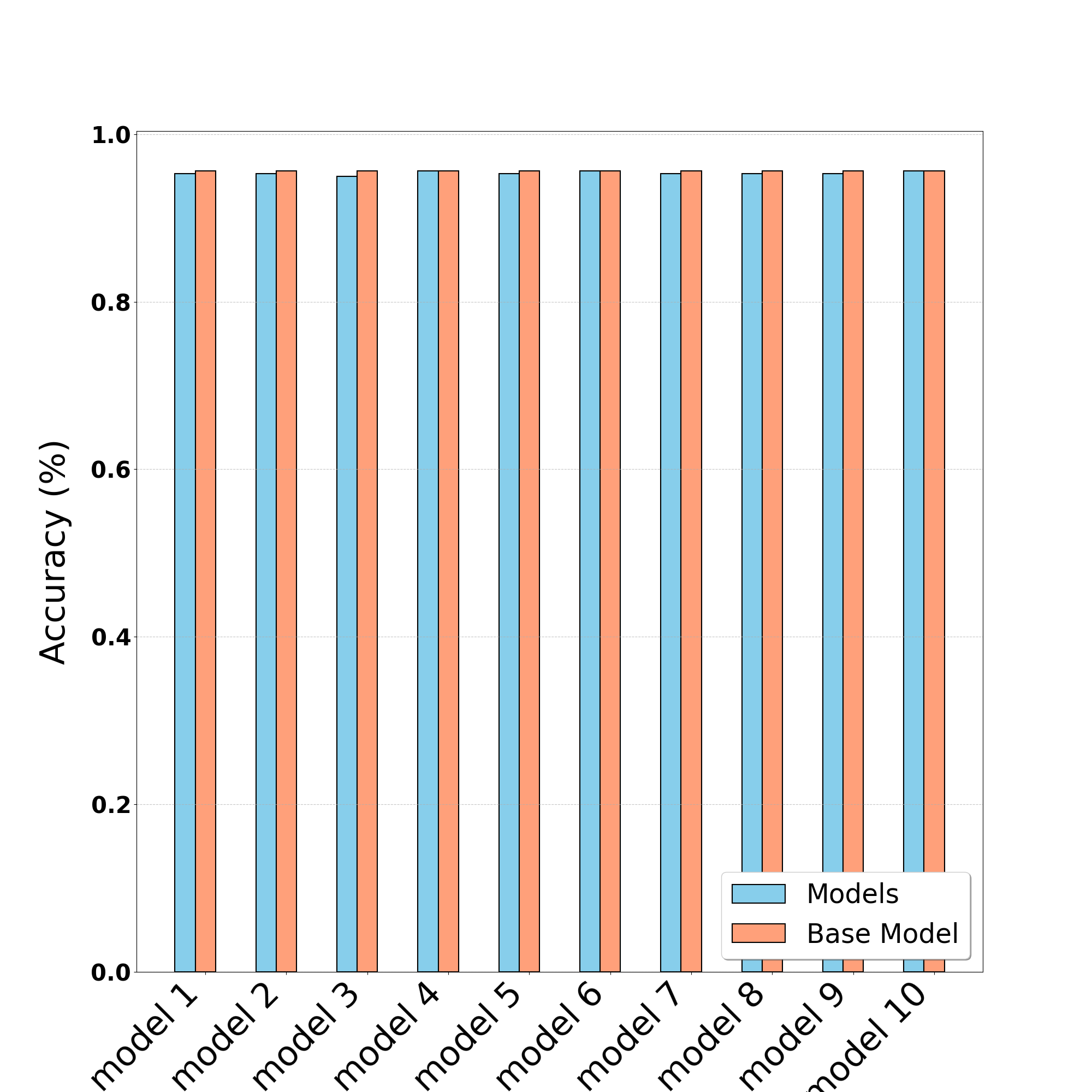}}   
            \end{subfigure}
            \hspace*{-0.4cm}
            \begin{subfigure}[\textbf{Chat Hard}]{\includegraphics[width=0.25\linewidth]{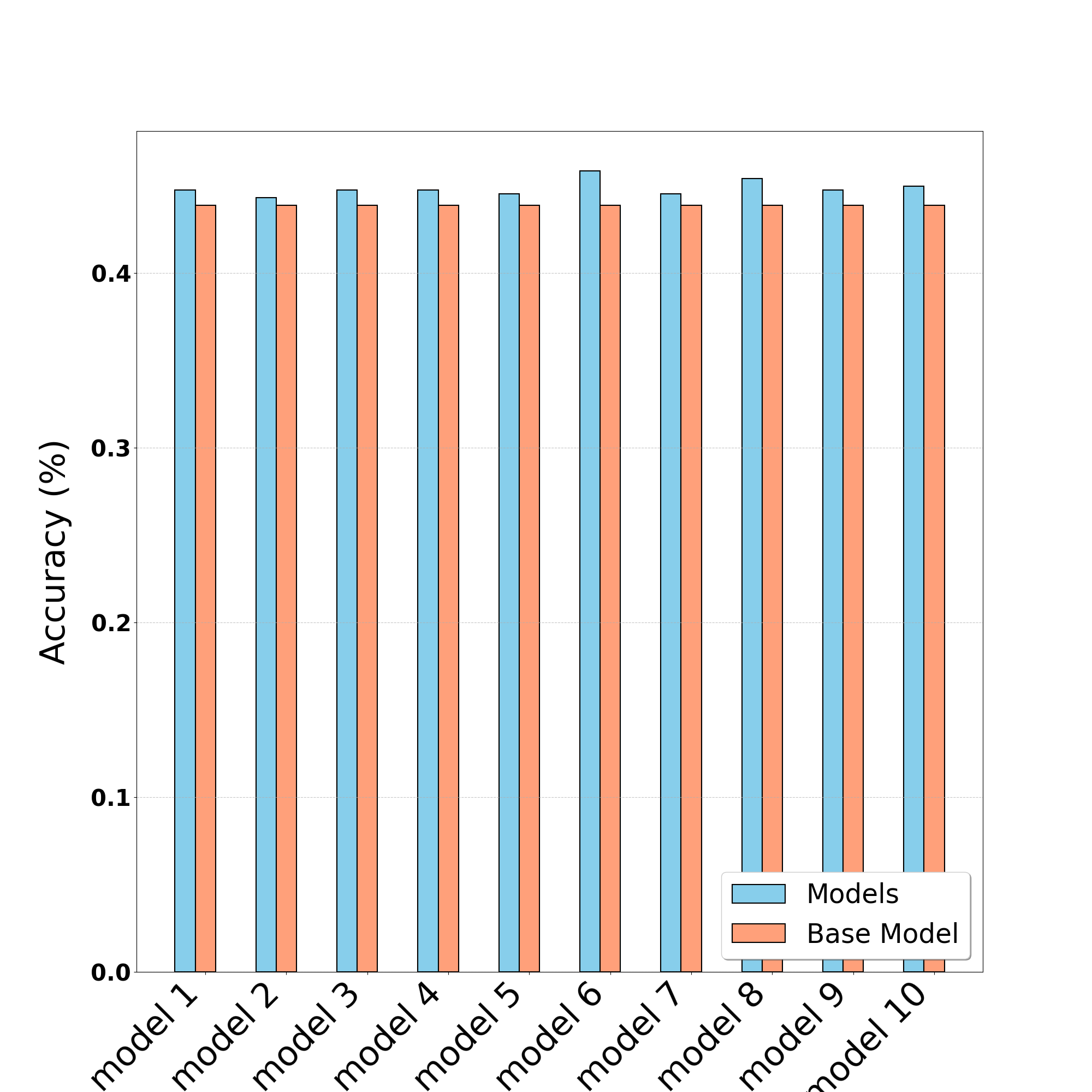}}   
            \end{subfigure}
            \hspace*{-0.4cm}
            \begin{subfigure}[\textbf{Safety}]{\includegraphics[width=0.25\linewidth]{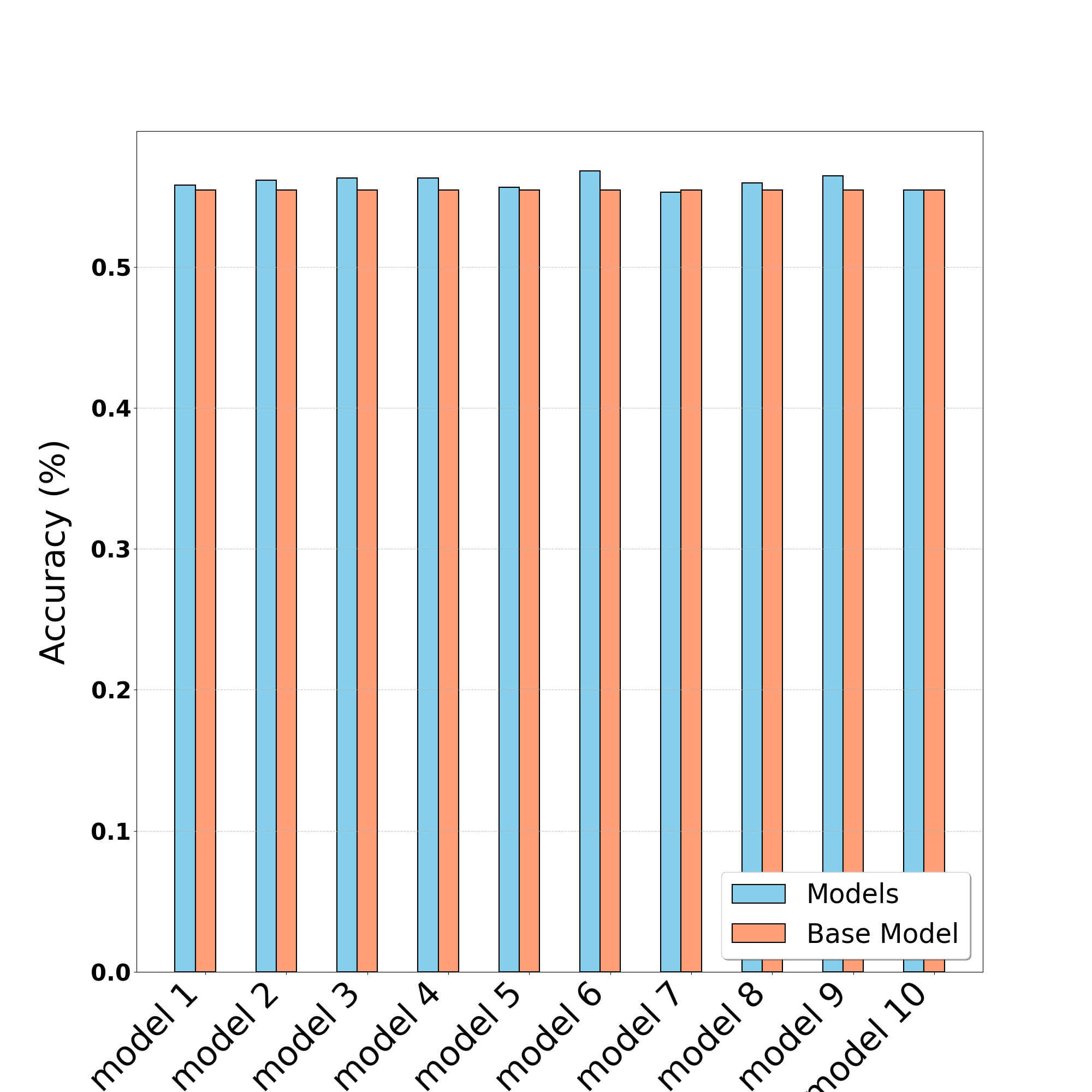}}  
            \end{subfigure}
            \hspace*{-0.4cm}
            \begin{subfigure}[\textbf{Reasoning}]{\includegraphics[width=0.25\linewidth]{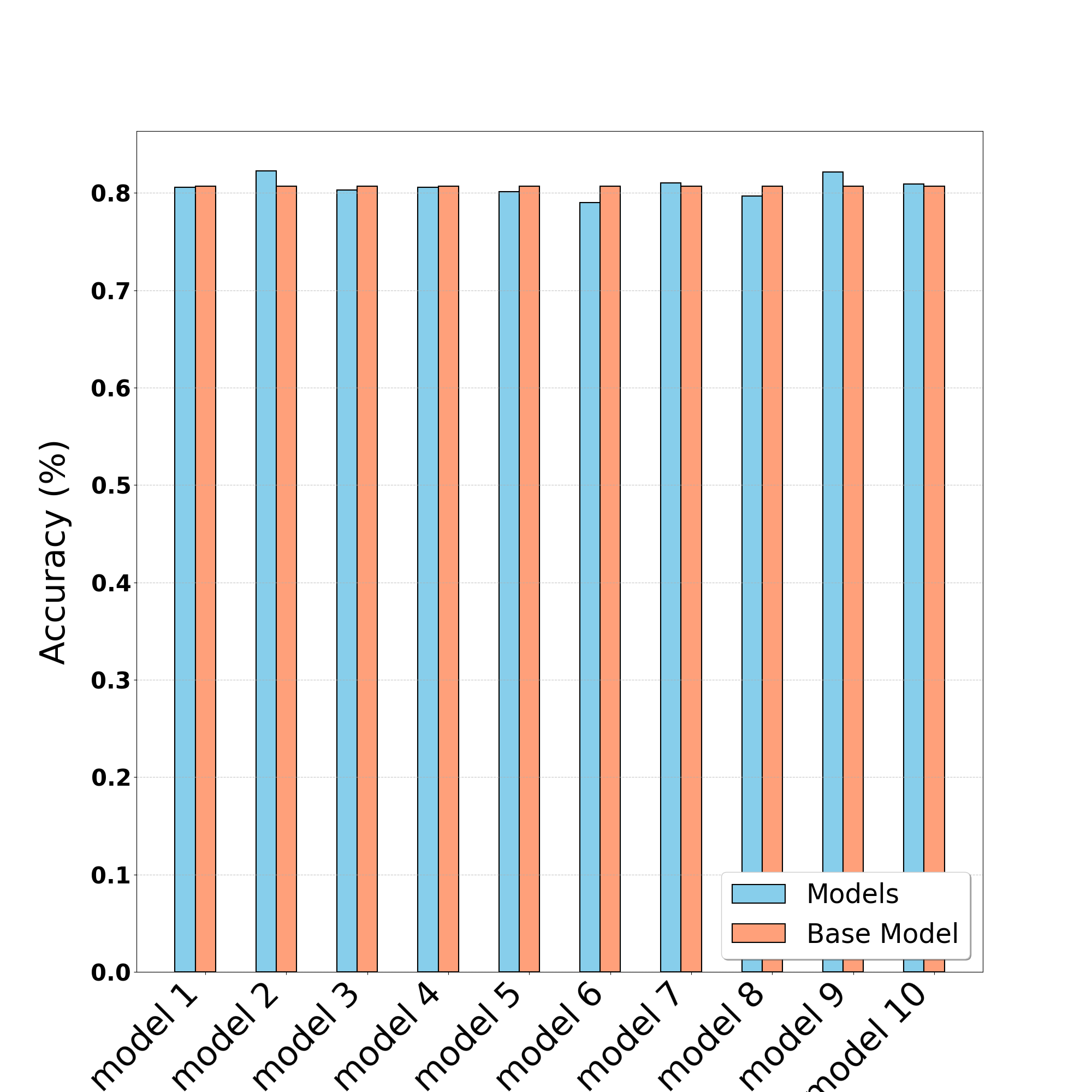}}   
            \end{subfigure}
        \end{minipage}%
    }
    \caption{\footnotesize{The comparison of each model in the ensemble with the single reward-head model on all evaluation datasets of the \textbf{RewardBenchmark} platform. In particular, the 10 blue bars indicate the model accuracy for each of the 10 models. The accuracy of the base model is given in orange. We see that for each of the 10 models in the ensemble, the performance is comparable with the base model.}}
    \label{fig:reward_models}
\end{figure*}

\section{Experimental Details for PPO Training}

The hyperparameter and details used in both the vanilla and the variance-aware PPO training are given in Tables 6 and 7. Most of the hyperparameters are taken as in \cite{vonwerra2022trl}. The major difference between the two methods is a judicious choice of the $\beta$ parameter, which controls the constraint domain of the optimization problem. To be consistent, we choose the $\beta$ parameter such that the KL divergence from the reference policy is roughly the same for both methods. This ensures that the search domains for both methods are roughly the same. The $\beta$ parameter is defined as the \textit{Initial KL Coeff} variable in the hyperparameter tables. 

\begin{table*}[htbp]
    \centering
    \begin{minipage}{0.45\textwidth}
        \centering
        \resizebox{0.8\textwidth}{!}{%
        \begin{tabular}{|c|c|}
            \hline
            \textbf{Hyperparameter}    & \textbf{Value} \\ \hline
            Effective Batch Size       & 128             \\ \hline
            Learning Rate              & 1.414e-5        \\ \hline
            Epochs                     & 1               \\ \hline
            Steps                      & 192             \\ \hline
            Initial KL Coeff           & 0.2             \\ \hline
            Adaptive KL Control        & False           \\ \hline
        \end{tabular}
        }
        \caption{\footnotesize{Hyperparameters used in training with vanilla PPO method.}}
        \label{tab:hyperparams_vanilla_ppo}
    \end{minipage}%
    \hspace{0.04\textwidth} % Space between the tables
    \begin{minipage}{0.45\textwidth}
        \centering
        \resizebox{0.75\textwidth}{!}{%
        \begin{tabular}{|c|c|}
            \hline
            \textbf{Hyperparameter}    & \textbf{Value} \\ \hline
            Effective Batch Size       & 128             \\ \hline
            Learning Rate              & 1.5e-5          \\ \hline
            Epochs                     & 1               \\ \hline
            Steps                      & 192             \\ \hline
            Initial KL Coeff           & 0.05            \\ \hline
            Adaptive KL Control        & False           \\ \hline
        \end{tabular}
        }
        \caption{\footnotesize{Hyperparameters used in training with Variance Aware PPO method.}}
        \label{tab:hyperparams_variance_ppo}
    \end{minipage}
\end{table*}

Table 8 summarizes the hardware specifications and resource consumption for training a single \textbf{GPT-2} model using PPO, including GPU memory, disk space, and total training time. The model is trained using four NVIDIA A40 GPUs, each with 48 GB of memory. The total disk space for storing the dataset, model checkpoints, and logs is approximately 6.55 GB. Training time is roughly 4 hours.

\begin{table}[htbp]
\centering
\label{table:hw_requirements_ppo}
\begin{adjustbox}{max width = 0.5\textwidth}
\begin{tabular}{|c|c|}
\hline
\textbf{Resource}                 & \textbf{Details} \\ \hline
GPU Model                         & NVIDIA A40      \\ \hline
Number of GPUs                    & 4              \\ \hline
Total GPU Memory                  & 18.4 GB          \\ \hline
Total Disk Space Required         & 6.55 GB         \\ \hline
Total Training Time               & 3.86 hours       \\ \hline
\end{tabular}
\end{adjustbox}
\caption{Hardware requirements for training a single PPO model.}
\end{table}

Figure \ref{fig:kl_dist} shows the evolution of the KL divergence between the trained and reference policies for both methods. The average and standard deviation of the KL divergence for the $40$ policies for both sets of methods are plotted. As can be seen with high probability, the KL divergence for both methods lies within the $1.2$ and $1.4$ range. Each of the $40$ independent policies was run with an initial random seed of $0$.
\begin{figure}[htbp]
    \centering
    \includegraphics[width = 0.5\textwidth]{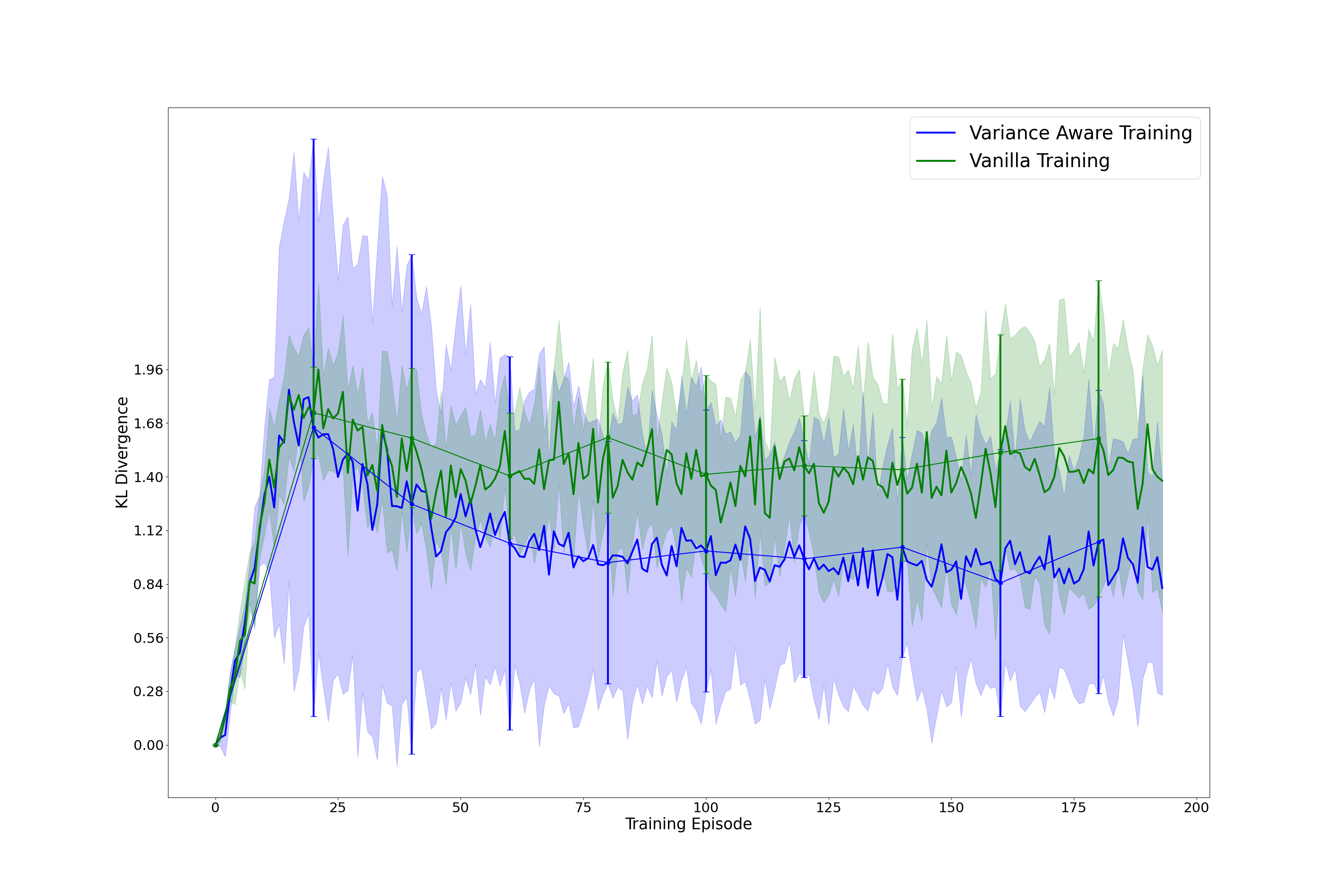}
    \caption{\footnotesize{The trajectories of the KL divergence as a function of training steps are plotted for both methods. Specifically, we plot the mean KL and the standard deviation of the KL for the $40$ independently trained policies for both methods. Green denotes the KL trajectory for the vanilla PPO method, whereas blue indicates the variance-aware method. As can be seen, by the end of the training, with high probability, the KL divergence of the final policy from the reference policy is roughly the same for both methods. In particular, both methods produce policies whose KL divergences from the reference policy lie between $1.2$ and $1.4$.}}
    %\vspace{-0.2cm}
    \label{fig:kl_dist}
\end{figure}

Figure \ref{fig:proxy_reward} shows the evolution of the rewards collected by the policies for both methods. The average and standard deviation of the rewards for the $40$ policies for both sets of methods are plotted. 
\begin{figure}[htbp]
    \centering
    \includegraphics[width = 0.5\textwidth]{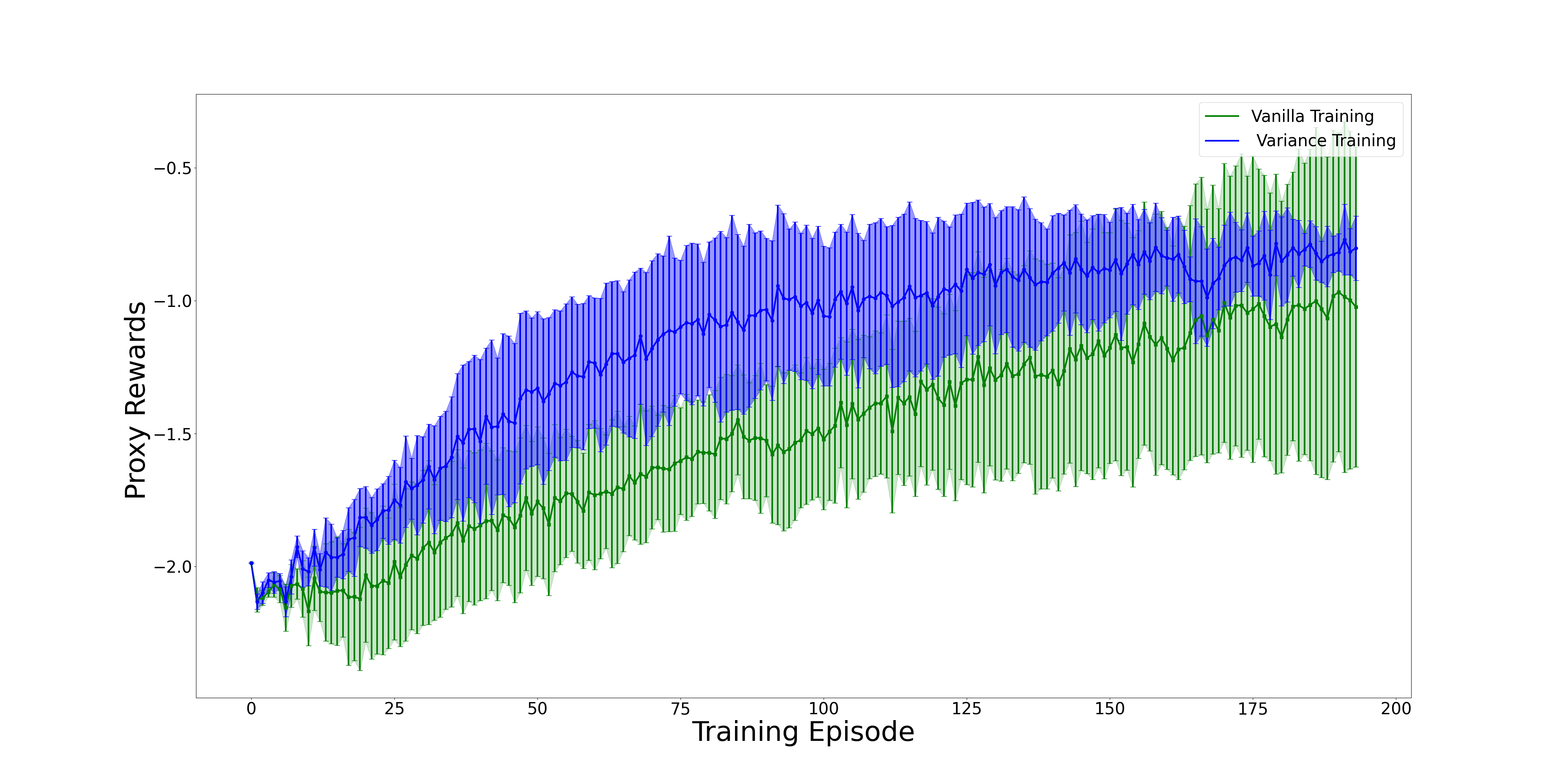}
    \caption{\footnotesize{The trajectories of the proxy reward as a function of training steps are plotted for both methods. Specifically, we plot the mean proxy reward and the standard deviation of the proxy rewards for the $40$ independently trained policies for both methods. Green denotes the trajectory for the vanilla PPO method, whereas blue indicates the variance-aware method.}}
    %\vspace{-0.2cm}
    \label{fig:proxy_reward}
\end{figure}

In Figure \ref{fig:reward_dist_2}, we repeat the experiment of Section \ref{sec:PPO}, but this time with $100$ sample policies trained using the vanilla and the variance aware method and evaluated using the judge reward model.

\begin{figure}[htbp]
    \centering
    \includegraphics[width = \textwidth]{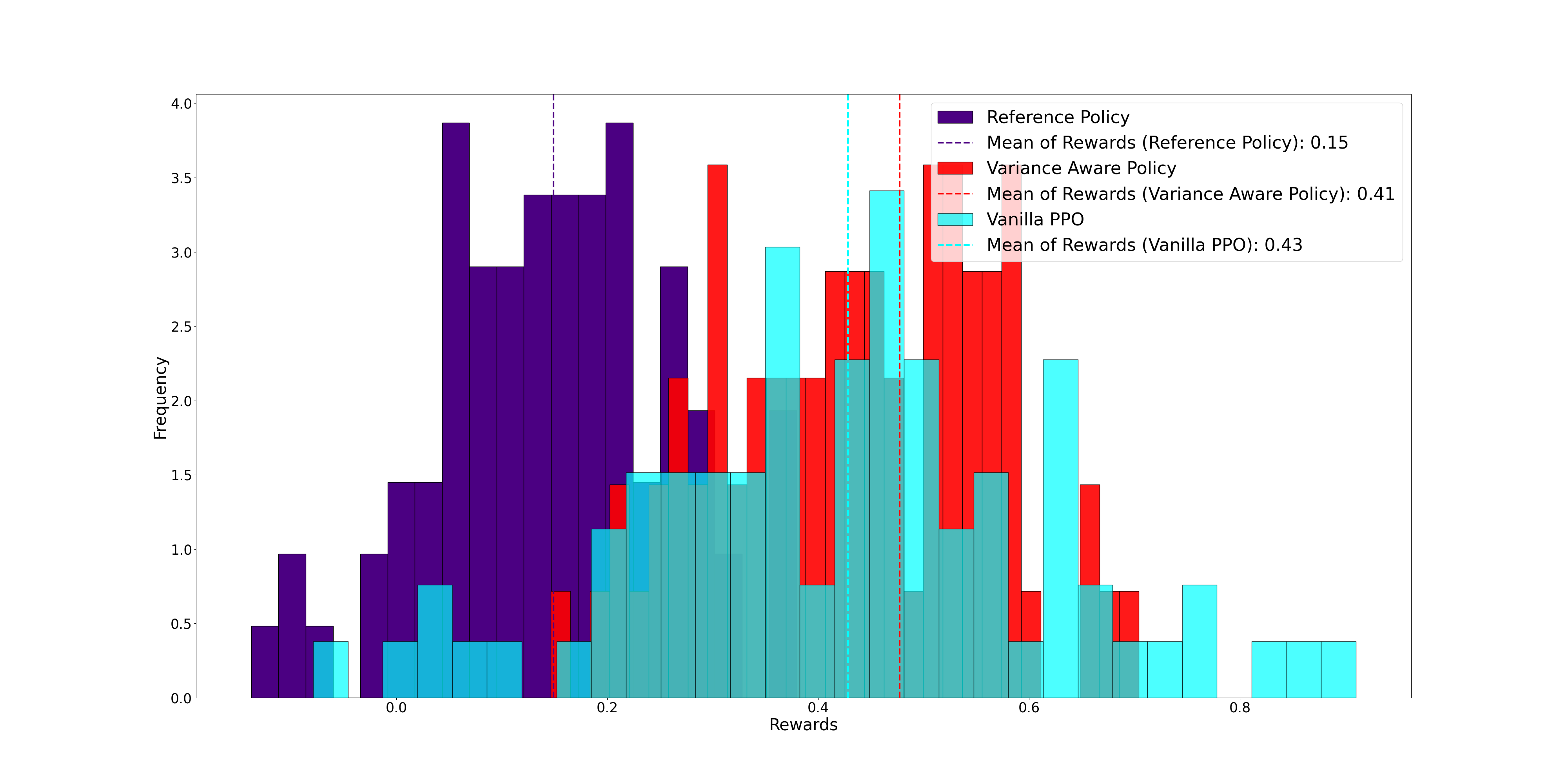}
    \caption{\footnotesize{The reward distribution for the two methods compared with the reference policy's quality. The distribution marked in indigo represents the reward distribution for the reference policy, based on 100 samples of the average reward determined by the judge reward model on responses generated by \textbf{GPT-2}. The reward distribution from the reference policy has a mean of $0.15$ and a variance of $0.012$. The reward distribution for the variance-aware method (in red) has a mean of $0.41$ and a variance of $0.016$. The reward distribution for the vanilla PPO method (in cyan) has a mean of $0.43$ and a variance of $0.038$.}}
    \label{fig:reward_dist_2}
    %\vspace{-0.2cm}
\end{figure}

\end{document}